\documentclass[twoside,11pt]{article}

\usepackage[preprint]{jmlr2e}
\usepackage[utf8]{inputenc}
\usepackage{mathtools}
\usepackage{multirow}
\usepackage{booktabs}
\usepackage{lscape} 
\usepackage{longtable}
\usepackage{algpseudocode, algorithm}

\usepackage{amsmath,amssymb,amsfonts,mathrsfs,sansmath,dsfont}


\newcommand{\argmax}{\arg\,\max}
\newcommand{\fc}{f_\mathcal{C}}
\newcommand{\fo}{f_\mathcal{O}}


\jmlrheading{1}{2022}{1-48}{4/00}{10/00}{meila00a}{Nahuel Statuto, Irene Unceta, Jordi Nin and Oriol Pujol}


\ShortHeadings{Sequential Copies}{Statuto et al.}
\firstpageno{1}

\begin{document}

\title{A Scalable and Efficient Iterative Method for Copying Machine Learning Classifiers}

\author{\name Nahuel Statuto \email nahuel.statuto@esade.edu \\
       \addr Department of Operations, Innovation and Data Sciences\\
       Universitat Ramon Llull, ESADE\\
       Sant Cugat del Vallès, 08172, Catalonia, Spain
       \AND
       \name Irene Unceta \email irene.unceta@esade.edu \\
       \addr Department of Operations, Innovation and Data Sciences\\
       Universitat Ramon Llull, ESADE\\
       Sant Cugat del Vallès, 08172, Catalonia, Spain
       \AND
        \name Jordi Nin \email jordi.nin@esade.edu \\
       \addr Department of Operations, Innovation and Data Sciences\\
       Universitat Ramon Llull, ESADE\\
       Sant Cugat del Vallès, 08172, Catalonia, Spain
       \AND
       \name Oriol Pujol \email oriol\_pujol@ub.edu \\
       \addr Departament de Matem\`atiques i Inform\`atica\\
       Universitat de Barcelona\\
       Barcelona, 08007, Catalonia, Spain}

\editor{}

\maketitle

\begin{abstract}

Differential replication through copying refers to the process of replicating the decision behavior of a machine learning model using another model that possesses enhanced features and attributes. This process is relevant when external constraints limit the performance of an industrial predictive system. Under such circumstances, copying enables the retention of original prediction capabilities while adapting to new demands. Previous research has focused on the single-pass implementation for copying. This paper introduces a novel sequential approach that significantly reduces the amount of computational resources needed to train or maintain a copy, leading to reduced maintenance costs for companies using machine learning models in production. The effectiveness of the sequential approach is demonstrated through experiments with synthetic and real-world datasets, showing significant reductions in time and resources, while maintaining or improving accuracy.
\end{abstract}

\begin{keywords}
  Sustainable AI, transfer learning, environmental adaptation, optimization, and model enhancement.
\end{keywords}

\section{Introduction}
\label{sec:introduction}
Machine learning has become widespread in many industries, with supervised algorithms automating tasks with higher precision and lower costs~\citep{GOMEZ2018175, 10.1145/3446662, SHEHAB2022105458, doi:10.1080/13675567.2020.1803246}. However, maintaining the performance of industrial machine learning models requires constant monitoring as the environment where they are deployed can change due to internal and external factors, such as new production needs, technological updates, novel market trends, or regulatory changes. Neglecting these changes can lead to model degradation and decreased performance. To prevent this, regular model monitoring is essential in any industrial machine learning application. Once served into production, models are frequently checked for any signs of performance deviation, which can occur just a few months after deployment. In case of deviation, models are either fully or partially retrained and substituted~\citep{pmlr-v119-wu20b}. However, this process can be time-consuming and costly, especially given the complex nature of modern model architectures that consume significant computational resources~\citep{8787029}. Managing and updating multiple models is therefore a challenge for companies and long-term sustainability is one of the main difficulties faced by industrial machine learning practitioners today~\citep{ref:Paleyes:2022}.

To address this, differential replication through copying~\citep{ref:Unceta:2020} has been proposed as a more efficient and effective solution to adapt models. This approach builds upon previous ideas on knowledge distillation~\citep{ref:Hinton:2015, ref:Bucilua:2006}, and allows for reusing a model's knowledge to train a new one that is better suited to the changing environment~\citep{ref:Unceta:2020b}. It can therefore bring numerous benefits in terms of cost and optimization of resources. 

Differential replication allows for model adaptation by projecting an existing decision function onto a new hypothesis space that meets new requirements. Typically, this process involves using the label probabilities produced by the given decision function as soft targets to train a new model in the new hypothesis space~\citep{ref:Liu:2018, ref:Wang:2020}. In the case of differential replication through copying, information about the original model's behavior is acquired via a hard-membership query interface and training is done using synthetic samples labeled by the original model.

Theoretically, the copying problem can be viewed as a dual optimization problem where both the copy model parameters and the synthetic samples are optimized simultaneously. Previous practical implementations of this problem have simplified it, generating and labeling a large set of synthetic data points in a single pass and then using them to optimize the parameters. This approach has been successful in validating copying on several datasets, but it is memory-intensive and computationally expensive and requires pre-setting several hyperparameters.

This article presents a novel approach to differential replication through copying that is based on an iterative scheme. The goal of performing a copy is to find the simplest model in the model copy hypothesis space that attains the maximum fidelity compared to the target model being copied. In the absence of data, this requires finding the best synthetic set for optimizing the model and the simplest model that guarantees perfect fidelity within the capacity of the copy model space. The proposed iterative formulation performs two steps at each iteration: (1) generating and selecting data for copying based on a compression measure of uncertainty and (2) learning the target copy model using the optimized dataset while controlling its complexity to achieve close-to-optimal results. This process allows for control over the amount of data/memory needed and the convergence speed to reach a steady performance. Results show that the proposed model requires $85-93\%$ less data samples/memory and has an average $80\%$ improvement in convergence speed compared to the single-pass approach, with no significant degradation in performance.

The main contributions of this article are: (1) to the best of our knowledge, this is the first algorithm to specifically address the problem of copying as described in ~\cite{ref:Unceta:2020}; (2) the proposed formulation and algorithms allow for explicit control of memory requirements and convergence speed while maintaining accuracy; and (3) the resulting algorithm is accurate, fast, and memory-efficient, as validated by successful results. The algorithm proposed has two hyperparameters, but this article also presents an algorithm that automatically sets one of them and dynamically adapts it during the learning process, ensuring fast convergence to an accurate solution. The resulting algorithm overcomes some of the limitations of current copying methods and opens up opportunities for its use in various real-life applications.

The rest of this paper is organized as follows. Section 2 addresses the issue of differential replication through copying, providing an overview of relevant methods, and introduces the single-pass approach as the simplest solution. Section 3 presents the sequential approach to copying. It starts by demonstrating the convergence of this approach to the optimal solution. It then introduce a sample removal policy based on an uncertainty measure. It ends by proposing a regularization term to prevent model forgetting. Section 4 empirically demonstrates the validity of the sequential approach through experiments on a large set of 58 UCI problems. The performance of the sequential copying framework is measured in terms of accuracy, convergence speed, and efficiency. The section ends with a discussion of the results. Finally, Section 5 summarizes the findings and outlines future research directions.

\section{Background on copying}
\label{sec:background}
The problem of \textit{environmental adaptation} introduced by~\citep{ref:Unceta:2020b} refers to situations where a machine learning model that was designed under a set of constraints must fulfill new requirements imposed by changes in its environment. The model needs to adapt from a \textit{source scenario}, $s$ where it was trained, to a \textit{target scenario}, $t$, where it is being deployed. 

\subsection{The problem of environmental adaptation}

Formally speaking, environmental adaptation is defined as follows: consider a task $\mathcal{T}$ and a domain $\mathcal{D}$. A trained model $h \in \mathcal{H}_s$ is designed to solve $\mathcal{T}$ in $\mathcal{D}$ under a  set of constraints $\mathcal{C}_s$ and a compatible hypothesis space $\mathcal{H}_s$. The problem of environmental adaptation arises when the original set of constraints $\mathcal{C}_s$ evolves to a new set $\mathcal{C}_t$. Under these circumstances, a potentially different hypothesis space $\mathcal{H}_t$ has to be defined in the same domain $\mathcal{D}$ and for the same task $\mathcal{T}$, which is compatible with the new constraints set $\mathcal{C}_t$~\footnote{In some cases the source and target hypothesis spaces $\mathcal{H}_s$ and $\mathcal{H}_t$ may be the same. However, in the most general case, where the new set of constraints defines a new set of feasible solutions, they are not.}. Unless the considered model $h$ is compatible with the new set of constraints, it will be rendered as obsolete, i.e., $h$ will no longer be a feasible solution. Hence, environmental adaptation refers to the need to adapt $h$ to the constraints introduced by $\mathcal{C}_t$, as shown in the equations of Table~\ref{tab:environmental}.

\begin{table}
\centering
\begin{tabular}{ccc}
\begin{tabular}{ll}
{Source Scenario}\\
$\text{for} \ \mathcal{T} \ \text{in}\; \mathcal{D}$\\
\\
$\underset{\tiny \text{for} \; h\in \mathcal{H}_s}{\text{maximize}}$ & $\mathsf{P}(y|x;h)$\\
subject to &$\mathcal{C}_s$ \\

\end{tabular}
& 
$\rightarrow$
&
\begin{tabular}{ll}
{Target Scenario}\\
$\text{for} \ \mathcal{T} \ \text{in}\; \mathcal{D}$\\
\\
$\underset{\tiny for \; h\in \mathcal{H}_t}{\text{maximize}}$ & $\mathsf{P}(y|x;h)$\\
subject to &$\mathcal{C}_t$\\
\end{tabular}
\end{tabular}
\caption{The problem of environmental adaptation looks for the solution in the same domain and task when the feasibility constraints change.}\label{tab:environmental}
\end{table}


This problem is different from \textit{domain adaptation} and \textit{transfer learning}~\citep{chen2021,chen2022,raffel2020}. Domain adaptation refers to adapting a model from one source domain $\mathcal{D}_s$ to a related target domain $\mathcal{D}_t$, due to a change in the data distributions. Environmental adaptation preserves the domain, but there is a change in constraints. Transfer learning requires reusing knowledge from solving one task $\mathcal{T}_s$ to solve a related task $\mathcal{T}_t$~\citep{pan2010}. Environmental adaptation preserves the task.

The environmental adaptation problem can be addressed through various methods, including re-training the existing model with a new set of constraints~\citep{barque2018}, using wrappers~\citep{mena2019,mena2020}, edited or augmented data subsets~\citep{song2008,dataaug2020,duan2018}, teacher-student networks and distillation mechanisms~\citep{bucilua2006, hinton2015, szegedy2016, yang2018}, label regularization~\citep{muller2019, yuan2020}, label refinement~\citep{bagherinezhad2018}, or synthetic data generation~\citep{bucilua2006, zeng2000}. A comprehensive overview of all the different methods is available in~\citep{ref:Unceta:2020b}. Here, we focus in differential replication, a general solution to the environmental adaptation problem.

Differential replication projects the decision boundary of an existing model to a new hypothesis space that is compatible with the target scenario. In the absence of access to the training dataset or model internals, differential replication through copying can be used to solve the environmental adaptation problem~\cite{ref:Unceta:2020}. In classification settings, copying involves obtaining a new classifier that displays the same performance and decision behavior as the original, without necessarily belonging to the same model family.

In the following sections, we introduce the problem of differential replication through copying and explore potential approaches for implementation.

\subsection{Differential replication through copying}\label{sec:diff}

Consider a classifier $f_{\mathcal{O}} \in \mathcal{H}s$ trained on an unknown dataset with input space dimensionality $d$ and output space cardinality $n_c$. Thus, $f{\mathcal{O}}:\mathbb{R}^d\rightarrow{0,1}^{n_c}$. In the most restrictive case, $f_{\mathcal{O}}$ is a hard decision classifier that outputs one-hot encoded label predictions, meaning that for any data point, it returns an $n_c$-output vector with all elements as zeros except for the target label position $i$, which has a value of 1.

The goal of copying is to obtain a new classifier $f_{\mathcal{C}} \in \mathcal{H}t$ parameterized by $\theta \in \Theta$ that mimics $f{\mathcal{O}}$ across the sample space. In the empirical risk minimization framework, we can consider an empirical risk function that measures the discrepancy between two classifiers. We can then formulate the copying problem as a dual optimization of $\theta$ and a set of synthetic data points $S$ over which the empirical risk is evaluated, since we do not have access to any training data. This problem can be written as:

\begin{flalign}\label{eq:capacity}
\underset{\theta,S}{\text{minimize}}  &\quad \Omega(\theta)\\
\text{subject to} & \quad \|R^{\mathcal{F}}_{emp}(f_{\mathcal{C}}(\theta),f_{\mathcal{O}})-R^{\mathcal{F}}_{emp}(f_{\mathcal{C}}(\theta^{\dagger}),f_{\mathcal{O}})\|<\varepsilon, \nonumber
\end{flalign}

\noindent
for a defined tolerance $\epsilon$ and a complexity measure $\Omega(\theta)$ (e.g. the $\ell_p$-norm of the parameters). The empirical risk function $R^{\mathcal{F}}{emp}$ measures the difference between the original model $f{\mathcal{O}}$ and the optimized copy model $f_{\mathcal{C}}$ and is referred to as the \textit{empirical fidelity error}. $\theta^{\dagger}$ is the solution to the following unconstrained optimization problem:

\begin{flalign}\label{eq:unconstrained}
\theta^{\dagger} = \underset{\theta,S}{\text{argmin}} & \quad R^{\mathcal{F}}_{emp}(f_{\mathcal{C}}(\theta),f_{\mathcal{O}}).
\end{flalign}

The solution to Equation~\ref{eq:capacity} is a combination of synthetic data and copy model parameters that minimize capacity and empirical risk. If $f_{\mathcal{O}} \in \mathcal{H}_t$, the solution of the unconstrained problem (Equation \ref{eq:unconstrained}) will always result in $R^{\mathcal{F}}{emp}(f_{\mathcal{C}}(\theta^{\dagger}),f_{\mathcal{O}}) = 0$, as the labeling problem for any set of data points using the original hard-label classifier is separable in $\mathcal{H}_t$.

\subsection{The single-pass approach}

The \textit{single-pass} approach~\citep{ref:Unceta:2020b} aims to find a sub-optimal solution to the copying problem by dividing it into two separate steps. Firstly, a synthetic set $S^*$ is found and then, the copy parameters $\theta^*$ are optimized using this set. The process is as follows:

\begin{itemize}
\item {\bf Step 1: Synthetic sample generation}. An exhaustive synthetic set $S^*$ is created by randomly sampling from a probability density function $P_S$ that covers the operational space of the copy. The operational space is the region of the input space where the copy is expected to mimic the behavior of the original model. The synthetic set can be expressed as:
    
    \begin{flalign}
         S^* =\{z_j | z_j\sim P_\mathcal{S},\;j = 1,\dots,N\}. 
    \end{flalign}

    The simplest choice for $P_S$ is a uniform distribution, or a normal or Gaussian distribution if the original data is normalized. (See~\citep{diego} for more information on different options for $P_S$.)

\item {\bf Step 2: Building the copy}. The optimal parameter set for the copy is obtained by minimizing the empirical risk of the copied model over the synthetic set $S^*$ obtained in Step 1:

    \begin{flalign}
    \theta^* = \underset{\theta}{\text{argmin}} & \quad R^{\mathcal{F}}_{emp}(f_{\mathcal{C}}(\theta),f_{\mathcal{O}})\Bigg |_{S=S^*}.
    \end{flalign}
\end{itemize}

The single-pass approach is a simplified solution for the problem modeled in Equation~\ref{eq:capacity}. Step 1 generates data without any optimization, but it requires a large dataset. Step 2 focuses on solving the unconstrained version of the copying problem defined in Equation~\ref{eq:unconstrained}. This approach can be used when the classifier complexity can be directly modeled. However, in the general case, it requires setting many critical parameters and selecting an appropriate model to ensure that the unconstrained problem is a good approximation of the general setting described in Equation~\ref{eq:capacity}. To guarantee good performance, a sufficiently large synthetic dataset must be generated.

The implementation of the single-pass approach has limitations. Firstly, the learning process using a one-shot approach with a single model may be limited by the available memory and unable to handle the full dataset. Secondly, keeping a large number of data samples in memory during training is resource-intensive and doesn't guarantee performance. In addition, blindly learning the entire synthetic dataset using a single model can result in inefficiencies. On the other hand, using an online learning strategy can reduce memory usage, but leads to slow convergence to the optimal solution, making the process time-consuming.

To overcome these limitations, a new approach, using an alternating optimization scheme, is introduced. This approach provides a fast and memory-efficient solution to the copying problem, solving Equations \ref{eq:capacity} and~\ref{eq:unconstrained}.

\section{The sequential approach}
\label{sec:sequential}
We introduce two theorems in this section to show that the sequential approach converges to the single-pass approach when conditions are optimal in terms of both parameters and behavior. Next, we provide a practical implementation of the sequential approach and various optimizations to achieve low memory usage and fast convergence. Specifically, we propose that a perfect copy should be able to compress the synthetic data points in its parameters and suggest epistemic uncertainty as a reliable metric for data compression. Based on this, we develop a data selection strategy that filters samples based on their level of compression by the copy model. This leads to a reduced number of data points needed for each learning step. Finally, we introduce an automatic hyper-parameter tuning policy to ensure optimal implementation of the sequential approach in practice. We refer to the resulting algorithm as the \textit{sequential approach} to copying.

\subsection{An alternating optimization algorithm for copying}

We start by introducing a preliminary alternating optimization algorithm for solving Equation~\ref{eq:capacity}. This algorithm alternates between two optimization steps at each iteration $t$:

\begin{itemize}
    \item {\bf Step 1: Sample Optimization}. The optimal synthetic set at iteration $t$ is selected by maximizing the empirical fidelity error for the previous model solution $\theta^*_{t-1}$. That is,
    \begin{flalign}\label{eq:step1}
    S^*_t = \arg\max_{S}\quad R_{emp}^{\mathcal{F}, S}(f_{\mathcal{C}}(\theta),f_{\mathcal{O}})\Bigg |_{\theta = \theta^*_{t-1}}
    \end{flalign}
    \item {\bf Step 2: Copy Parameter Optimization}. The optimal copy parameters $\theta^*_t$ at iteration $t$ over samples $S_t^*$ are obtained by:
    \begin{flalign}\label{eq:step2}
    \underset{\theta}{\text{minimize}}  &\quad \Omega(\theta)\\
    \text{subject to} & \quad \|R^{\mathcal{F}}_{emp}(f_{\mathcal{C}}(\theta),f_{\mathcal{O}})-R^{\mathcal{F}}_{emp}(f_{\mathcal{C}}(\theta^{\dagger}),f_{\mathcal{O}})\|\Bigg |_{S = S^*_t}<\varepsilon. \nonumber
    \end{flalign}
\end{itemize}

The algorithm starts each iteration $t$ by selecting a set of synthetic data points that maximize the empirical risk. These are the points that the copy model from the previous iteration $t-1$ did not model correctly. By reducing the empirical risk to zero, we can minimize the loss function. In Step 2, we minimize the empirical loss over $S_t$ while keeping the copy model complexity as low as possible. The rest of this section focuses on solving Equations \ref{eq:step1} and \ref{eq:step2} under various assumptions.

\subsection{Step 1: Sample optimization}\label{lab:theorems}

We start by introducing a formal sequential sample generation scheme and examining its convergence properties. This will serve as the basis for constructing the final algorithm that solves Equation \ref{eq:step1}. To achieve this, we recast Equation \ref{eq:unconstrained} in a probabilistic context and prove two theorems showing that the sequential copying process converges in both parameters and behavior to the single-pass approach under optimal conditions.

\subsubsection{The sequential framework}

Consider a sequence of finite sets $S_t$ such that 

\begin{equation}
    S_t \subseteq S_{t+1} \subseteq \cdots \subseteq S
\end{equation}\label{eq:subsets}

\noindent
for $t \in \mathbb{N}$, where $|S| = \aleph^0$, which approaches the set $S$ as $t$ increases towards infinity. The sequential framework is based on the notion that, if $S_t$ converges to $S$, then the optimal copy parameters $\theta_t^*$ obtained by optimizing over $S_t$ will converge to $\theta^*$, the optimal parameters over $S$. This approximation can be iteratively obtained by drawing samples from a given probability density function.

To prove this, we cast the unconstrained copying problem in probabilistic terms \footnote{Observe that we can easily recover the empirical risk minimization framework considering probability density functions of the exponential family. Consider the empirical loss defined as $\frac{1}{n}\sum_{i=1}^n \ell(a,b;\theta)$, then 
$$\arg\min_{\theta} \frac{1}{n}\sum_{i=1}^n \ell(a,b;\theta) = \arg\max_{\theta} \frac{1}{n}\sum_{i=1}^n e^{-\gamma\cdot\ell(a,b;\theta)}.$$
} and express it as the solution to the following empirical distributional problem:

\begin{equation}
    \theta^* = \argmax_\theta\sum_{z\in S} \mathcal{P}(\theta|f_{\mathcal{O}}(z),f_{\mathcal{C}}(z)) = \argmax_\theta F(\theta),
    \label{eq:opt}
\end{equation}
where $S ={z | z\sim P_\mathcal{S}}$ is a synthetic dataset of size $|S| = \aleph^0$.

Next, we present Theorem~\ref{theorem:function_convergence}, which shows that the solution to Equation~\ref{eq:opt} for $S$ can be approximated using the sequence of iterative values $S_t$.

\begin{theorem}
Let $S_t\subseteq S_{t+1}\subseteq\cdots\subseteq S$ be a sequence of subsets converging to $S$. Then, the sequence of functions $\big\{F_t\big\}_t$ defined as $F_t(\theta)=\sum_{z\in S_t} \mathcal{P}(\theta|\fc(z,\theta),\fo(z))$, converges uniformly to $F(\theta)=\sum_{z\in S} \mathcal{P}(\theta|\fc(z,\theta),\fo(z))$.
\label{theorem:function_convergence}
\end{theorem}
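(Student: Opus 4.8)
The plan is to exploit the nested structure $S_t \subseteq S_{t+1}$ together with the non-negativity of the summands to recast the claim as a monotone convergence situation, and then invoke Dini's theorem to upgrade pointwise convergence to uniform convergence.

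First I would record three elementary facts. Since each $S_t$ is finite and $\theta \mapsto \mathcal{P}(\theta|\fc(z,\theta),\fo(z))$ is continuous for every fixed $z$, each partial sum $F_t$ is a finite sum of continuous functions and hence continuous. Because $\mathcal{P}$ is a probability and therefore non-negative, the inclusion $S_t \subseteq S_{t+1}$ gives $F_t(\theta) \le F_{t+1}(\theta)$ for every $\theta$, so $\{F_t\}_t$ is monotone increasing in $t$. Finally, for each fixed $\theta$ the values $F_t(\theta)$ are the partial sums of the non-negative series $F(\theta)=\sum_{z\in S}\mathcal{P}(\theta|\fc(z,\theta),\fo(z))$; assuming this series converges, $F_t(\theta) \nearrow F(\theta)$, i.e.\ we have monotone pointwise convergence.

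Next I would write the remainder as a tail sum,
\[
0 \le F(\theta) - F_t(\theta) = \sum_{z \in S\setminus S_t}\mathcal{P}(\theta|\fc(z,\theta),\fo(z)),
\]
which decreases monotonically to $0$ as $t\to\infty$ for each $\theta$. With $\Theta$ compact and the limit $F$ continuous, Dini's theorem applies to the monotone sequence $F-F_t \searrow 0$ of continuous functions and yields $\sup_\theta |F(\theta)-F_t(\theta)| \to 0$, which is precisely the asserted uniform convergence.

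The main obstacle is supplying the two hypotheses Dini needs but the statement does not spell out: compactness of the parameter domain $\Theta$ and continuity of the limit $F$. Continuity of an infinite sum is not automatic, and it cannot be extracted from Dini's theorem itself without circularity. I would resolve this in one of two ways: either assume $\Theta \subseteq \mathbb{R}^p$ compact and take $F$ continuous as part of the problem setup, or---cleaner and self-contained---exhibit a $\theta$-independent summable majorant $M_z \ge \sup_\theta \mathcal{P}(\theta|\fc(z,\theta),\fo(z))$ with $\sum_{z\in S} M_z < \infty$. Such a majorant would bound $\sup_\theta (F-F_t) \le \sum_{z\in S\setminus S_t} M_z \to 0$ directly by the Weierstrass M-test, simultaneously forcing $F$ to be continuous and the convergence to be uniform, and bypassing compactness altogether. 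Producing that majorant from the exponential-family form $\mathcal{P}\propto e^{-\gamma\ell}$ alluded to in the footnote is where the real work lies.
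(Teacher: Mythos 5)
Your proposal is correct in structure and starts from the same decomposition as the paper --- both proofs write the error as the tail sum $F(\theta)-F_t(\theta)=\sum_{z\in S\setminus S_t}\mathcal{P}(\theta|\fc(z,\theta),\fo(z))$ --- but the mechanism you use to make the tail uniformly small is genuinely different, and in fact sounder. The paper bounds each summand by $1$ (since $\mathcal{P}$ is a probability) to get $\|F-F_t\|_\infty\le\big|S\setminus S_t\big|$, and then asserts this vanishes ``since $S_t$ converges to $S$.'' That step is problematic on the paper's own terms: it stipulates $|S|=\aleph^0$ while each $S_t$ is finite (of cardinality $t\cdot n$), so $|S\setminus S_t|$ is infinite for every $t$ and the cardinality bound never tends to zero; the argument is only valid under an unstated assumption that the tails are uniformly summable. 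Your route --- monotonicity of $F_t$ from nestedness and non-negativity, then Dini's theorem on compact $\Theta$ with continuous limit, or better your Weierstrass M-test variant with a summable majorant $M_z$ --- makes exactly this hidden hypothesis explicit, and the M-test version simultaneously secures pointwise finiteness of $F$, continuity of $F$, and uniformity, repairing the gap rather than inheriting it. Your two auxiliary assumptions are also consistent with the paper's framework: compactness of $\Theta$ and continuity of $F$ are both invoked in the paper's proof of Theorem~\ref{theorem:params_convergence} (continuity there is deduced from the uniform convergence of Theorem~\ref{theorem:function_convergence}, so your self-contained M-test derivation avoids any circularity). The one caveat you correctly flag --- that producing the majorant from the exponential-family form $\mathcal{P}\propto e^{-\gamma\ell}$ is where the real work lies --- is the honest statement of what the theorem actually requires; as written, neither the statement nor the paper's proof supplies it.
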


Theorem~\ref{theorem:params_convergence} proves the convergence of parameters in Theorem~\ref{theorem:function_convergence}.

\begin{theorem}
Under the conditions of Theorem~\ref{theorem:function_convergence}, the sequence of parameters $\big\{\theta_t^*\big\}_t$ defined as $\theta_t^*=\argmax_{\theta \in \Theta} F_i(\theta)$, converges to $\theta^*=\argmax_{\theta \in \Theta} F(\theta)$, where $\Theta$ is the complete set of parameters.
\label{theorem:params_convergence}
\end{theorem}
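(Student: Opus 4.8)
The plan is to prove this as an instance of the classical argmax-convergence theorem: uniform convergence of a sequence of objective functions, together with compactness of the domain and a unique maximizer of the limit, forces the maximizers to converge. Theorem~\ref{theorem:function_convergence} already supplies the uniform convergence $\sup_{\theta\in\Theta}|F_t(\theta)-F(\theta)|\to 0$, so what remains is to transfer this information from the functions to their arguments. I would first make explicit the two standing regularity hypotheses the statement implicitly needs: that $\Theta$ (or at least a neighborhood containing all the $\theta_t^*$) is compact, and that $F$ attains its supremum at a unique point $\theta^*$; continuity of $F$ follows from continuity of $\mathcal{P}$ in $\theta$.

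The first concrete step is to show that the optimal values converge, i.e. $F(\theta_t^*)\to F(\theta^*)$. This is a short sandwiching argument using both optimality relations. Writing $\varepsilon_t=\sup_\theta|F_t(\theta)-F(\theta)|$, optimality of $\theta^*$ for $F$ gives $F(\theta^*)\ge F(\theta_t^*)$, while optimality of $\theta_t^*$ for $F_t$ gives $F_t(\theta_t^*)\ge F_t(\theta^*)$. Chaining these with the uniform bound yields
\begin{equation}
F(\theta^*)\ge F(\theta_t^*)\ge F_t(\theta_t^*)-\varepsilon_t\ge F_t(\theta^*)-\varepsilon_t\ge F(\theta^*)-2\varepsilon_t,
\end{equation}
so that $F(\theta_t^*)\to F(\theta^*)$ as $\varepsilon_t\to 0$.

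The second step upgrades value convergence to parameter convergence by a compactness-and-contradiction argument. Suppose $\theta_t^*\not\to\theta^*$; then some subsequence stays outside a fixed ball around $\theta^*$, and by compactness it admits a further subsequence $\theta_{t_k}^*\to\bar\theta\ne\theta^*$. Continuity of $F$ together with the value convergence just established forces $F(\bar\theta)=\lim_k F(\theta_{t_k}^*)=F(\theta^*)$, so $\bar\theta$ is also a global maximizer of $F$, contradicting uniqueness. Hence every subsequence of $\{\theta_t^*\}_t$ has a further subsequence converging to $\theta^*$, which is equivalent to $\theta_t^*\to\theta^*$.

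The main obstacle I anticipate is not the analytic core above but justifying its hypotheses in the copying setting. Compactness of the effective parameter region must be argued explicitly, for instance through coercivity of $\Omega(\theta)$ or by restricting attention to a bounded sublevel set in which the maximizers necessarily lie. Uniqueness of $\theta^*$ is the most delicate point: if $F$ possesses several maximizers, the conclusion can only be stated as convergence to the set $\argmax_\theta F(\theta)$, with every limit point characterized as a maximizer. I would therefore either impose a strict-concavity or identifiability assumption guaranteeing a unique $\theta^*$, or restate the theorem in terms of set convergence, noting that the value-convergence step remains valid without change.
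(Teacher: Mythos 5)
Your proposal is correct, and it reaches the same conclusion from the same standing hypotheses the paper uses (compactness of $\Theta$, uniqueness of the maximizer, and continuity of $F$ inherited from uniform convergence), but the internal argument is organized differently and is in fact tighter than the paper's. The paper argues directly on the optimality relation: from $F_i(\theta_i^*)\geq F_i(\theta)$ for all $\theta$ it passes to the limit, invoking only \emph{pointwise} convergence, to obtain $F(\hat{\theta}^*)\geq F(\theta)$, where $\hat{\theta}^*$ is declared to be ``the limit of the sequence $(\theta_i^*)_i$''; compactness and continuity are then cited to assert that $\hat{\theta}^*$ exists, and uniqueness identifies it with $\theta^*$. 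As written this has two soft spots that your proof repairs. First, compactness only guarantees subsequential limit points, not that the full sequence converges, so the existence of ``the limit'' $\hat{\theta}^*$ is presumed rather than proved; your subsequence-and-contradiction step (every limit point maximizes $F$, uniqueness forces all limit points to equal $\theta^*$, hence the whole sequence converges) is exactly the missing argument. Second, passing to the limit in $F_i(\theta_i^*)\geq F_i(\theta)$ when the evaluation point $\theta_i^*$ itself moves requires uniform convergence (or continuity plus convergence of the arguments), not the pointwise convergence the paper cites; your quantitative sandwich
\begin{equation*}
F(\theta^*)\;\geq\; F(\theta_t^*)\;\geq\; F_t(\theta_t^*)-\varepsilon_t\;\geq\; F_t(\theta^*)-\varepsilon_t\;\geq\; F(\theta^*)-2\varepsilon_t
\end{equation*}
uses the uniform bound $\varepsilon_t=\sup_{\theta\in\Theta}|F_t(\theta)-F(\theta)|$ precisely where it is needed, and cleanly separates value convergence from parameter convergence. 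Your closing observations go beyond the paper in a useful direction: the paper simply assumes uniqueness as a modeling hypothesis, whereas you correctly note that without it the statement should be weakened to set convergence to $\argmax_\theta F(\theta)$, with the value-convergence step surviving unchanged. In short, same theorem, same hypotheses, but your decomposition (sandwich for values, then subsequence extraction for arguments) is the rigorous version of what the paper sketches.
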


The full mathematical proof of convergence of the copy parameter sequence can be found in Appendix~\ref{Sec:AppA}.

{\bf Definition:} The {\it sequential approach} learning algorithm is an iterative process that optimizes the copy parameters $\theta_t$ incrementally over sets of synthetic data points $S_t,\; t = 1\dots T$. 

The preliminary version of the sequential approach is outlined in Algorithm~\ref{alg:sequential}. We begin by generating an initial synthetic set $S_0$ of size $n$ and optimizing the copy parameters accordingly. As stated in line 4, at each iteration $t$ the new synthetic set $S_t$ adds $n$ samples to the previous set $S_{t-1}$. The cardinality of $S_t$ is therefore given by $\Big|S_t\Big|=t\cdot n$ and grows linearly with $t$. This linear growth policy was chosen as the simplest strategy to build the subsets in Equation~\ref{eq:subsets}. Other strategies can also be used. Line 5 describes the optimization of the algorithm considering the solution of the previous step. 

\begin{algorithm}[!t]
\caption{Preliminary version of the sequential approach}
\label{alg:sequential}
\begin{algorithmic}[1]
\State \textbf{Input}: \textbf{int} $T$, \textbf{int} $n$, \textbf{Classifier} $f_\mathcal{O}$
\State\textbf{Output}: Optimal copy parameters
\State $S_0 \gets \{(z_j, f_\mathcal{O}(z_j)) \;|\;  z_j \sim P_\mathcal{S}, j=1\dots n \}$
\State $\theta^*_0 \gets \argmax_\theta\sum\limits_{z\in S_0} \mathcal{P}(\theta|f_\mathcal{O}(z),f_\mathcal{C}(z))$
\For{$t=1$ to $T$}
    \State $S_t \gets S_{t-1}\bigcup\;\{(z_j, f_\mathcal{O}(z_j)) \;|\;  z_j \sim P_\mathcal{S}, j=1\dots n \}$
    \State $\theta^*_t \gets \argmax_\theta\sum\limits_{z\in S_t} \mathcal{P}(\theta|f_\mathcal{O}(z),f_\mathcal{C}(z),\theta^*_{t-1})$
\EndFor

\State \Return $\theta^*_t$

\end{algorithmic}
\end{algorithm}

For demonstration, we test the sequential approach on a toy binary classification dataset, the \textit{spirals} problem. We first train a Gaussian kernel SVM on this dataset, achieving perfect accuracy. Then, we copy the learned boundary using a fully-connected neural network with three hidden ReLu layers of 64, 32, and 10 neurons, and a SoftMax output. We train it for 1000 epochs with a batch size of 32 samples. Figure~\ref{fig:spirals}a) shows the original decision boundary and data points.

\begin{figure}[h]
  \centering
  \subfloat[]{
  \includegraphics[width=0.43\columnwidth]{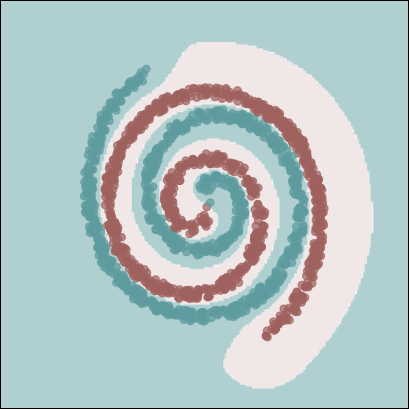}}
  \subfloat[]{\includegraphics[width=0.57\columnwidth]{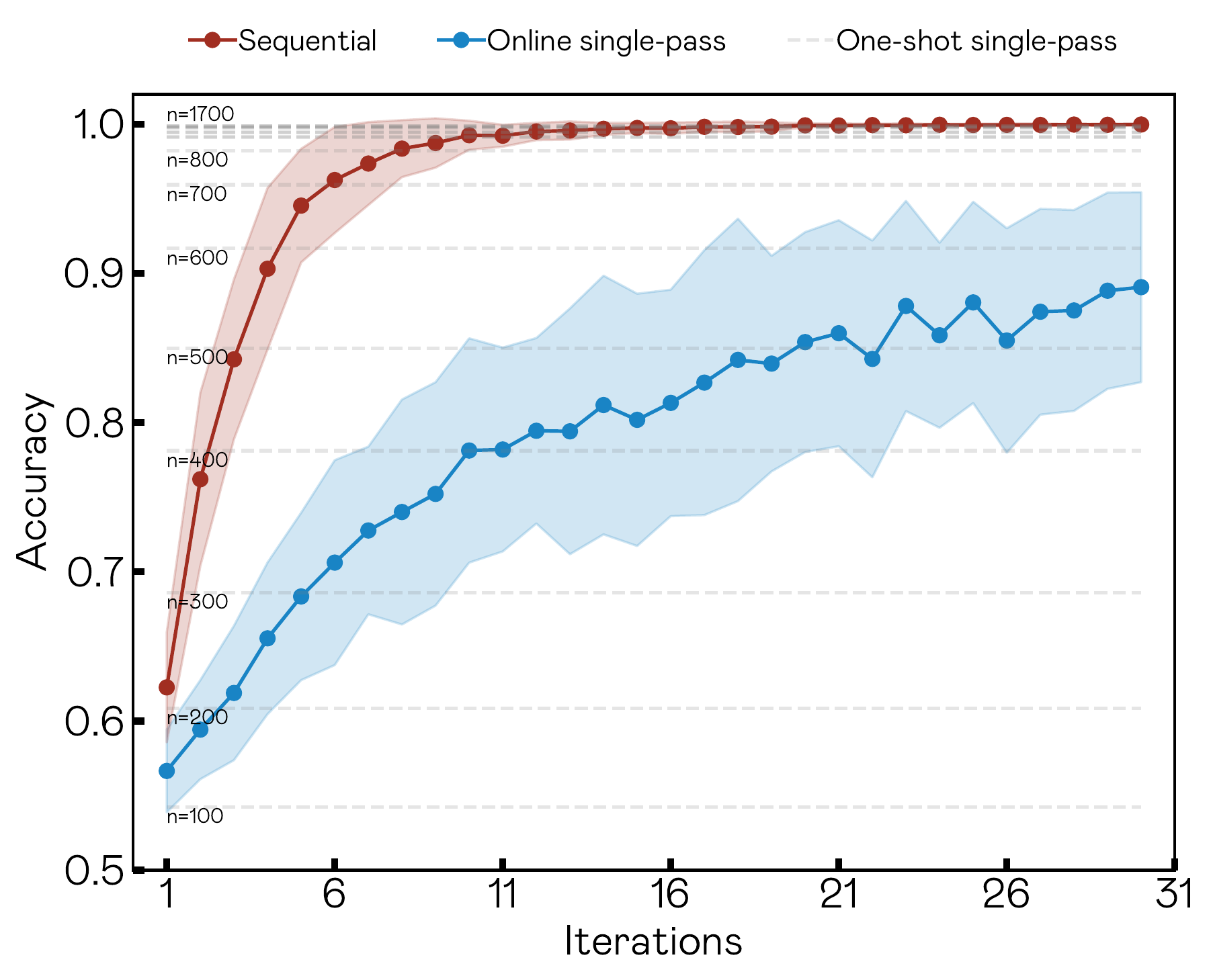}}

    \caption{a) Decision boundary learned by a Gaussian kernel SVM trained on the \textit{spirals} dataset. b) Copy accuracy averaged over 30 independent runs for the sequential approach (orange) and the online single-pass approach (blue) with $n=100$. The grey dashed lines depict the accuracy obtained using a one-shot single-pass approach for various values of $n$. The shaded areas indicate a standard deviation from the mean.}
\label{fig:spirals}
\end{figure}

We compare the results of Algorithm~\ref{alg:sequential} with those of the single-pass approach in both its unique-step and online implementations. Figure~\ref{fig:spirals}b) shows the average accuracy computed at different iterations for copies trained using different methods. The online and sequential approaches are trained by generating $n=100$ new samples at each iteration. The grey dashed lines represent the accuracy achieved by single-pass copies trained in a one-shot for different values of $n$. The reported values represent accuracy measured over the original test data points.


The plot highlights differences in convergence speeds among the three approaches. The one-shot single-pass approach with $n=500$ has an accuracy of $0.85$, while the purely online single-pass model reaches an accuracy of around $0.85$ after 30 iterations and 3000 samples. In comparison, the sequential approach has an accuracy of $0.94$ at $t=5$ with 500 samples, indicating faster convergence. However, both the single-pass and sequential approaches reach an accuracy of $\approx 1$ after being exposed to 1000 samples.

\begin{table}[h!]
  \centering  
            \setlength{\tabcolsep}{5pt}
            \renewcommand{\arraystretch}{1.5}
            \begin{tabular}{c|c|c|c}
                             & One-shot single-pass & Online single-pass & Sequential\\
                             \hline
            Memory & \begin{tabular}[c]{@{}l@{}}Fixed large value\\ Needs to be estimated\end{tabular}  & Fixed small value  & Increase monotonically    \\
            \hline
            Accuracy & High but $N$-dependent &\begin{tabular}[c]{@{}l@{}}Increase with iteration\\ Upper bound is $N$-dependent \end{tabular} & Increase with iteration  \\
            \hline 
            Time & $T\propto O(t\cdot N)$   & $T\propto O(t\cdot N)$  & $T\propto O(t^2\cdot N)$ \\
            \hline
            \end{tabular}
    \caption{Comparison table that summarizes the principal differences between both three learning approaches: one-shot \textit{single-pass}, online \textit{single-pass} and \textit{sequential}.}
\label{tab:tab1}
\end{table}

Table~\ref{tab:tab1} compares the three approaches in terms of memory, accuracy, and computational time. The one-shot single-pass approach requires a large amount of data to achieve high accuracy and estimating the number of training data points can lead to under or over-training. The online single-pass approach alleviates memory allocation issues but has limited accuracy and is dependent on the value of $n$. While it is guaranteed to converge to the optimal solution, this is an asymptotic guarantee. The sequential approach combines the benefits of both: it increases the number of points over time, so there's no need to estimate this parameter upfront, and stops copying once a desired accuracy is reached, as proven by Theorem~\ref{theorem:function_convergence}'s monotonic accuracy increase with iterations. The main drawback of this method is its computational cost. The sequential approach grows quadratically with the number of data points, unlike the linear growth of both the single-pass and pure online approaches. This makes the sequential approach highly time-consuming in its current implementation. This comes as no surprise given the construction of the sequence of the set $\{S_t\}$. We discuss potential improvements to address this issue in what follows.

\subsubsection{A model compression measure}\label{lab:duality}

The purpose of a parametric machine learning model is to compress
the data patterns relevant to a specific task. In the copying framework, the original model is considered the ground truth, and the goal is to imitate its decision behavior. Because this model produces hard classification labels, its decision boundary creates a partition of the feature space that results in a separable problem. Hence, any uncertainty measured during the copying process should only come from the model, not the data: a perfect copy should have no aleatoric uncertainty while compressing all the data patterns.

\noindent
{\bf Assumption:} Any uncertainty measured during the copying process is epistemic\footnote{Epistemic uncertainty corresponds to the uncertainty coming from the distribution of the model parameters. It can be reduced as the number of training samples increases. This allows us to single out the optimal model parameters. Thus, any measurement of uncertainty comes from the mismatch between the model parameters and the desired parameters, i.e. a perfect copy will have zero epistemic uncertainty.}. 

It follows from this assumption that when the epistemic uncertainty over a given data point is minimum, then the data point is perfectly compressed by the model. This is, we can assume that a data point is perfectly learned by the copy when its uncertainty is zero. 

Estimating uncertainty in practice requires that we measure the difference between the predictive distribution and the target label distribution. This can be done using various ratios or predictive entropy. We refer the reader to~\citep{devries2018learning,SENGE201416,nguyen2018reliable} for full coverage of this topic. For the sake of simplicity, here we consider the simplest approach to measuring uncertainty. We relax the hard-output constraint for the copy. Instead, we impose that, for any given data point, the copy returns a $n_c$-output vector of class probability predictions, such that

\begin{equation*}
    f_\mathcal{C}:\mathbb{R}^d\times\Theta\longrightarrow\big[0,1\big]^{n_c}.
    \label{def:copy_model}
\end{equation*}

For the sake of simplicity, we model the \textit{uncertainty}, $\rho$, of the copy for a given data point $z$ as the normalized euclidean norm of the distance between the $n_c$-vectors output by the original model and the copy, such that

\begin{equation}
    \rho(z,\theta) = \frac{\|f_\mathcal{C}(z,\theta) - f_\mathcal{O}(z)\|_2}{\sqrt{n_c}} \in [0,1].
    \label{eq:rho_nseq}
\end{equation}

A small $\rho$ value indicates that the copy has low uncertainty (or strong confidence) in the class prediction output for $z$ and that $z$ is properly classified. Conversely, a large $\rho$ value indicates that, despite the copy model having strong confidence in the class predicted for $z$, this prediction is incorrect. Or alternatively, despite the prediction being correct, there is a large dispersion among the output class probabilities. 

This uncertainty measure can be used to evaluate how well individual data points are compressed by the copy model in the sequential framework. In the limit, this measure can also be used to assess how well the copy model replicates the original decision behavior. Hence, we can introduce a new loss function such that for a given set of data points $S_t$ we define the empirical risk of the copy as

\begin{equation}\label{eq:emprisk}
    R^{\mathcal{F}}_{emp}(f_{\mathcal{C}}(\theta_j),f_{\mathcal{O}}) = \frac{1}{\big|S_t\big|}\sum_{z\in S_t} \rho^2(z,\theta).
\end{equation}

The results of using the loss function are displayed in Figure \ref{fig:sequential_rho}. The plot illustrates the average value of $\rho_t$ per iteration over all data points and runs for the different copying approaches introduced in Figure \ref{fig:spirals}b). The value of $\rho_t$ is significantly low for the sequential learning approach at iteration $t=15$. Conversely, the single-pass and online learning methods exhibit higher uncertainty levels throughout the process. This difference can be attributed to the fact that the sequential approach continuously exposes the model to the same data points, reducing uncertainty iteration by iteration. The other two methods, however, only expose the model once to the same data, causing higher uncertainty levels. The one-shot single-pass approach exposes the model to the whole dataset at once, while the sequential approach never exposes the model to the same data points more than once. As a result, the sequential approach has higher redundancy and reduces uncertainty faster when compared also to the online method, where the learner is never exposed to the same data points more than once.

\begin{figure}[h!]
  \centering
  \includegraphics[width=0.75\columnwidth]{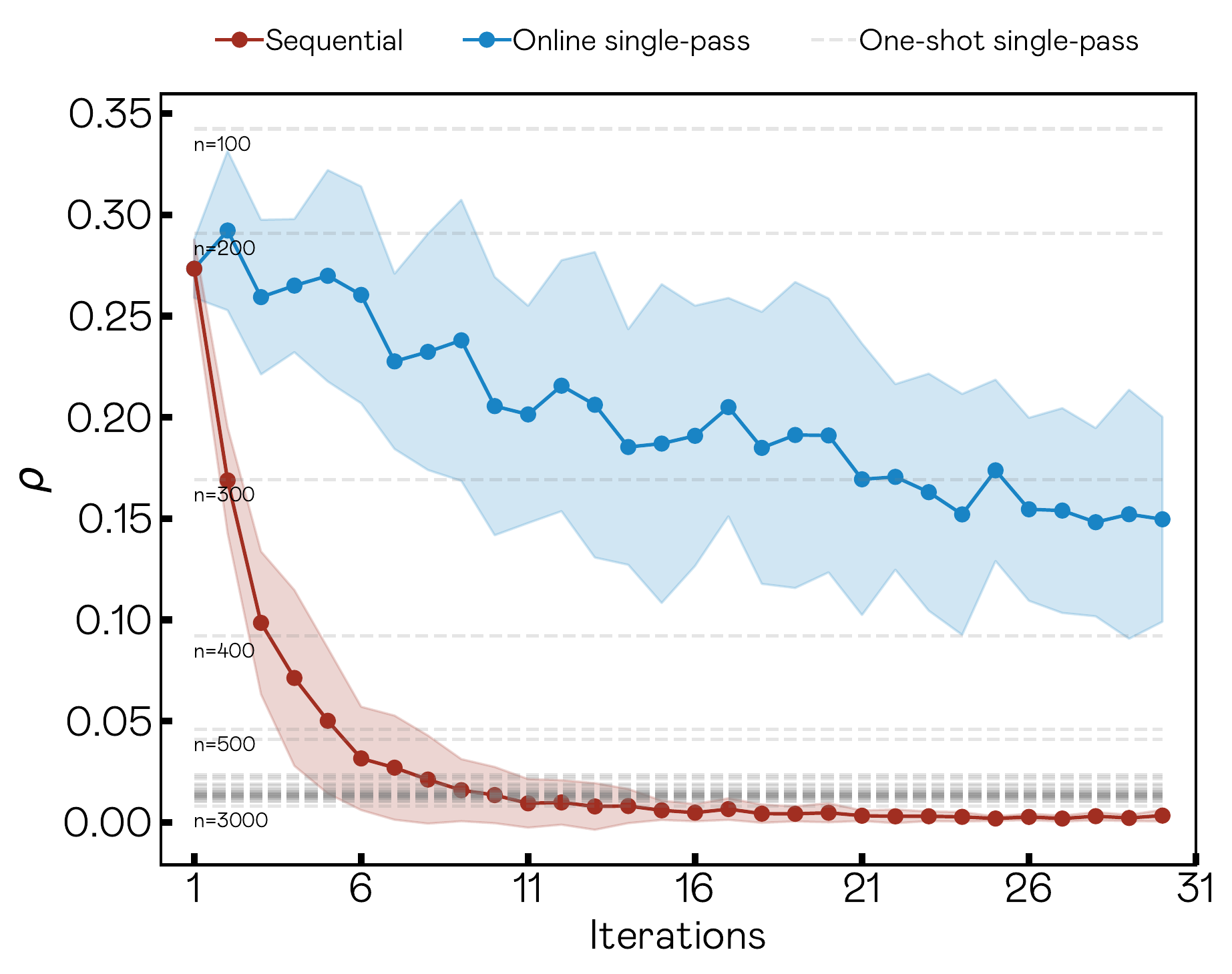}
    \caption{Average uncertainty across the entire dataset for both the sequential (orange) and online single-pass (blue) copying approaches, averaged over 30 independent runs. The uncertainty achieved by the one-shot single-pass approach, with different values of $n$, is represented by dashed lines. The shaded area indicates a one standard deviation deviation from the mean.}
\label{fig:sequential_rho}
\end{figure}



\subsubsection{A sample selection policy for copying}\label{lab:filtering}

Including an uncertainty measure in the training algorithm enables us to assess the degree of compression for each data point. The higher the compression, the better the copy model is at capturing the pattern encoded by each data point for the given task. This estimation can be used for selecting those points that contribute the most to the learning process. By filtering out the rest of the samples, we can reduce the number of resources consumed when copying. Hence, we enforce a policy that uses uncertainty as a criterion for sample selection. 

At each iteration, data points with an uncertainty lower than a threshold $\delta$ are removed from the learning process (refer to Algorithm~\ref{alg:filtering}). The procedure starts with building a new sequential set by randomly sampling $n$ points and adding them to $S_{t-1}$ in line 1. Then, in line 2, the uncertainty measure is used to select points with $\rho_j \leq \delta$, forming the filtered set $S_t$ that is used to optimize the copy parameters.

Figure~\ref{fig:drooping_no_lambda} presents the results for the sequential training with $n=100$ and different values of the sample selection parameter $\delta$ in the \textit{spirals} dataset. For comparison, results for the online single-pass approach and the pure sequential approach are also shown. Figure \ref{fig:drooping_no_lambda}a) displays the average accuracy at each iteration. The results demonstrate that sequential training with sample selection performs better than online training, but falls short of the pure sequential setting.

Figures~\ref{fig:drooping_no_lambda}b) and c) show the change in $\rho$ and number of synthetic data points used for training over increasing iterations, respectively. The online single-pass approach shows a constant uncertainty, while the pure sequential approach approaches 0. Contrary to what one might expect, the sample selection policy leads to an overall increase in uncertainty, while also reducing the number of data points used for training. Eventually, the number of samples $Nn$ converges to a fixed value after a few iterations.

\begin{algorithm}[ht]
\caption{Sample selection policy}
\label{alg:filtering}
\begin{algorithmic}[1]
\State \textbf{Input}: \textbf{Sample Set} $S_{t-1}$, 
\textbf{int} $n$, \textbf{float} $\delta$, \textbf{Distribution} $P_\mathcal{Z}$, \textbf{Measurement} $\rho()$, \textbf{Classifier} $f_\mathcal{O}$, \textbf{Classifier} $f_\mathcal{C}(\theta_{t-1}^*)$
\State \textbf{Output}: Filtered set
\State $ \Xi \gets S_{t-1}\bigcup\;\{(z_j, f_\mathcal{O}(z_j)) \;|\;  z_j \sim P_\mathcal{Z}, j=1\dots n \}$
    \State $S_t \gets \{z| z\in \Xi, \rho(z,\theta^*_{t-1})\geq \delta\}$ \Comment{Filter according to uncertainty}

\State \Return $S_t$

\end{algorithmic}
\end{algorithm}

\begin{figure}[ht]
  \centering
  \subfloat[]{ \includegraphics[width=0.47\columnwidth]{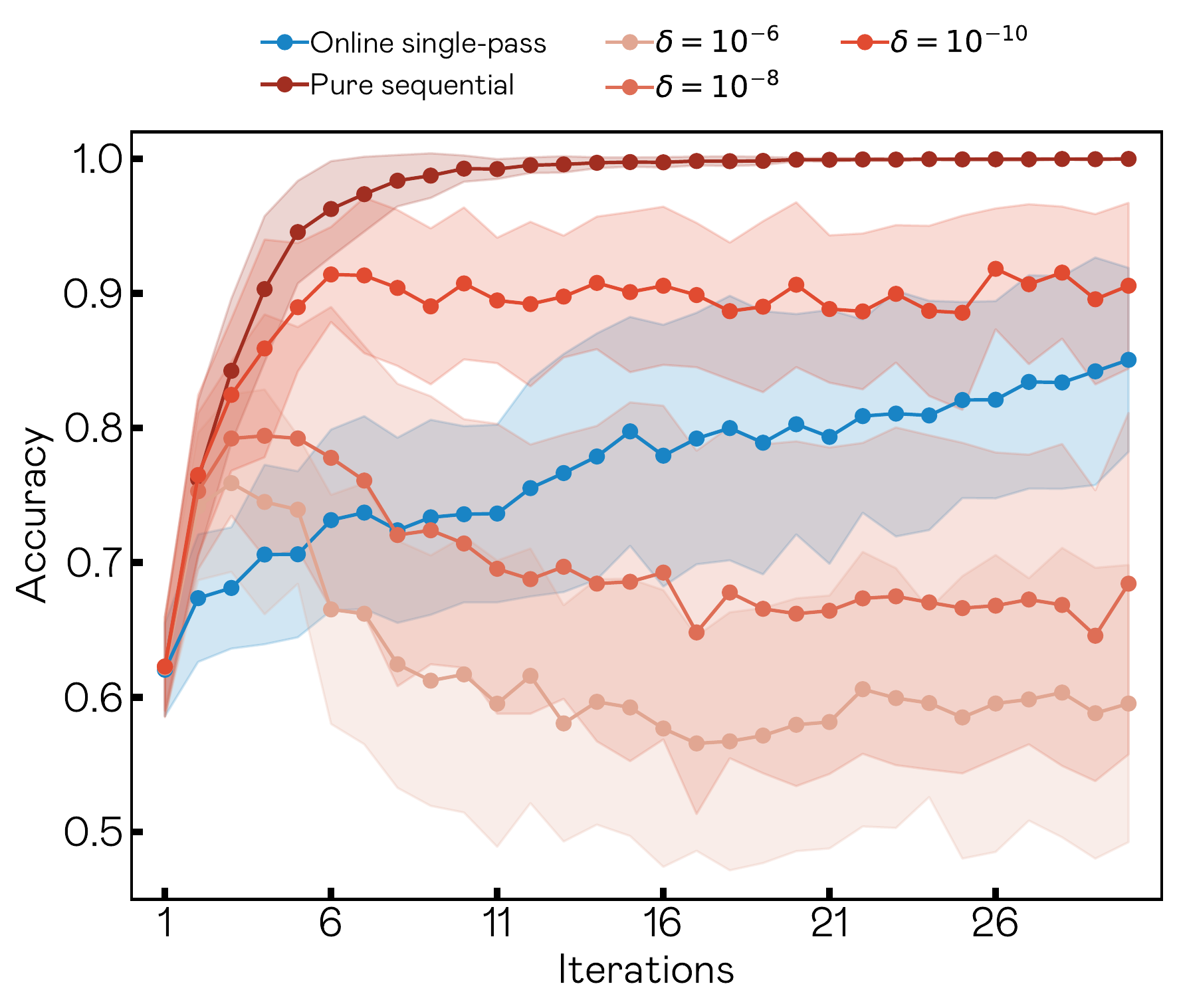}}
  \subfloat[]{ \includegraphics[width=0.47\columnwidth]{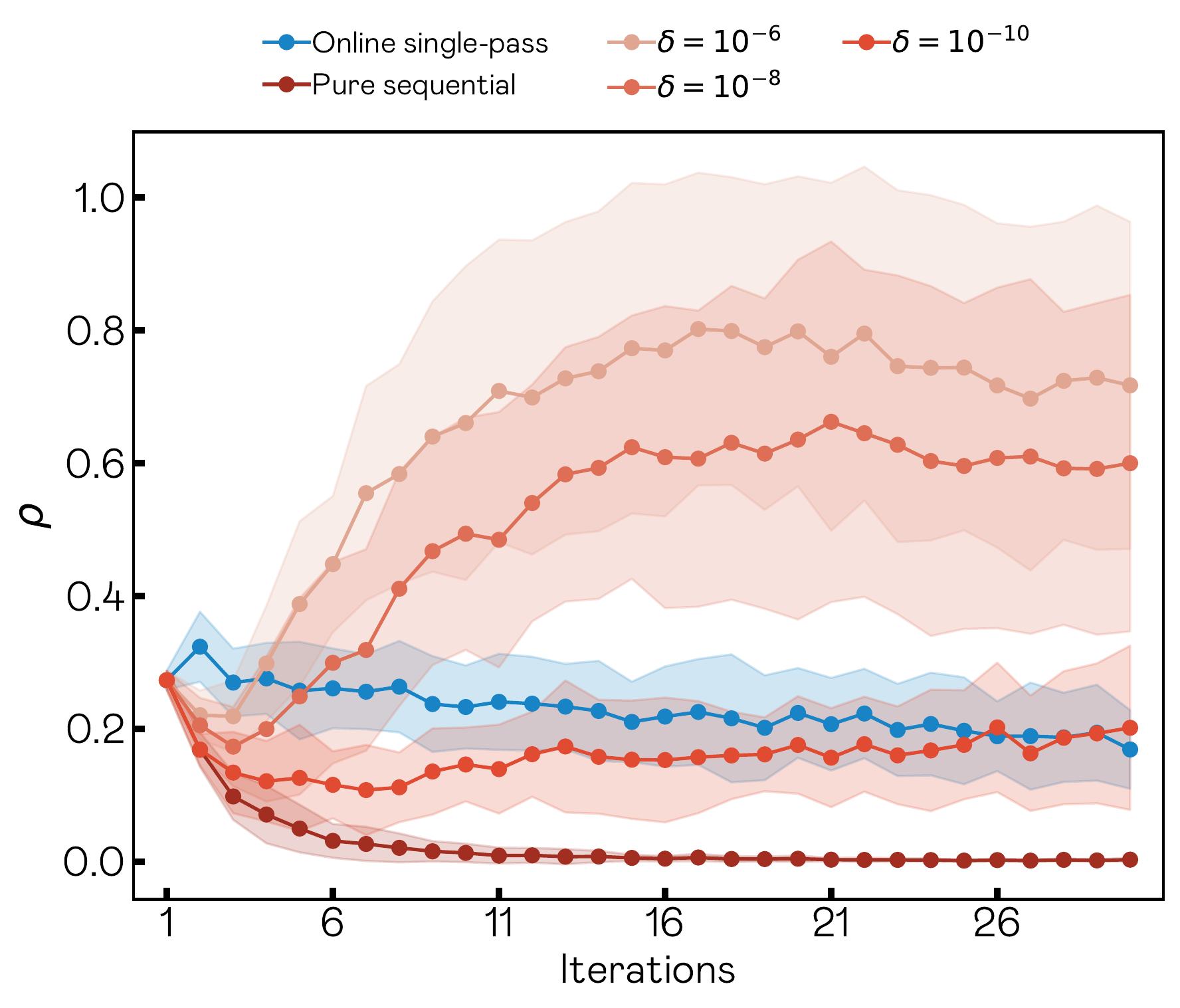}}
  
  \subfloat[]{ \includegraphics[width=0.47\columnwidth]{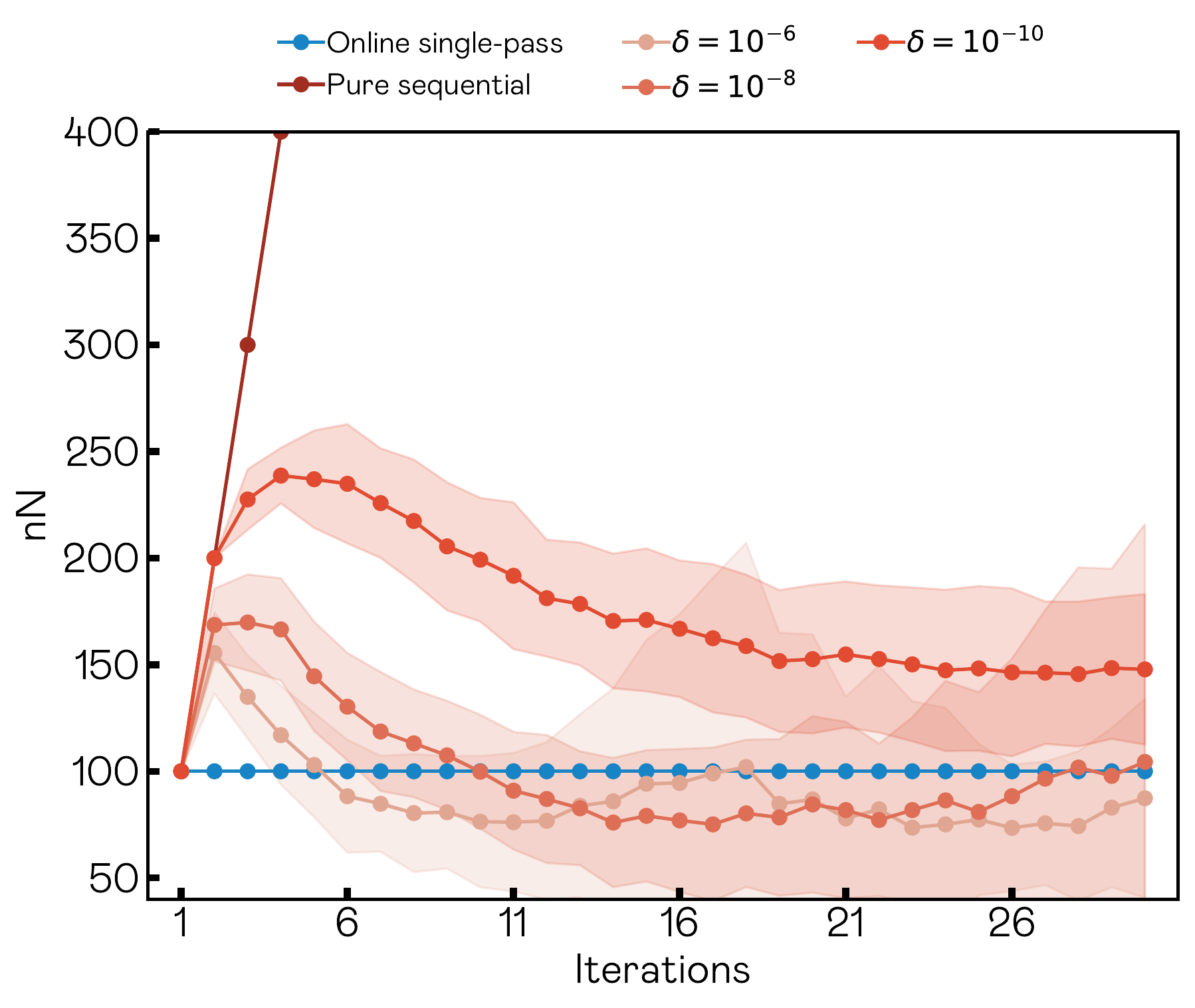}}
    \caption{a) Accuracy and b) uncertainty per iteration for the sequential approach with different uncertainty thresholds, averaged over 30 independent runs. Results for the online setting are also shown in blue for comparative purposes at every iteration. c) Number of data points used at each iteration. The $y$-axis is restricted to $400$ in order to make curves observable. The number of points of the sequential learning (orange line) grows linearly until $30\cdot100 = 3000$.} \label{fig:drooping_no_lambda}
\end{figure}

We consider all three plots in Figure~\ref{fig:drooping_no_lambda} at once. During the first iterations, when copies are still not sufficiently tuned to the data, there is no filtering effect. The number of points $Nn$ grows almost equally for all the settings for a few iterations, as shown in Figure \ref{fig:drooping_no_lambda}c). During this time, accuracy increases gradually, as shown in Figure \ref{fig:drooping_no_lambda}a), and $\rho$ decreases, as displayed in Figure \ref{fig:drooping_no_lambda}b), as expected for a model that is compressing information. The differences arise after a few iterations when the filtering effect begins and the number of data points decreases in settings where the sample selection policy is enforced. At this stage, the copy models have a confidence level close to $\delta$, which allows the removal of samples. As a result, accuracy stops growing and becomes flat, while average uncertainty starts to rise. With regard to the number of points, the curves quickly stabilize by $t=18$. The model with the lowest threshold, $\delta = 10^{-10}$, accumulates the larger number of points ($\approx 250$). This is less than 10$\%$ of the number of points required by the pure sequential method, but still reaches a reasonable accuracy. On the other hand, models trained with $\delta=10^{-8}$ and $\delta=10^{-6}$ have lower accuracy compared to the online single-pass setting.

This algorithm addresses the problem in Equation~\ref{eq:step1} but with limitations. The uncertainty metric $\rho$ is linked to the empirical fidelity error $\mathcal{R}^{FC}_{emp}$. The points with high uncertainty also have the highest empirical fidelity error. However, removing points from set $S_t$ violates the assumptions in Theorems~\ref{theorem:function_convergence} and~\ref{theorem:params_convergence}. When the copy model has high confidence in the synthetic data, sample removal changes its behavior from sequential (adding points incrementally) to online (fixed number of points per iteration). For parametric models, this shift results in \textit{catastrophic forgetting}, as shown in Figure \ref{fig:drooping_no_lambda} where sequential models start to act like online models. The removal of samples increases uncertainty in the copy models, causing them to forget prior information. To address this, we introduce several improvements in the implementation of Step 2 in the alternating optimization algorithm.

\subsection{Step 2: Optimizing copy model parameters}

Let us briefly turn our attention to Equation \ref{eq:step2}. This equation models the challenge of controlling the capacity of the copy model while minimizing the loss function. To tackle this issue, we introduce a capacity-enhancement strategy. For this purpose, it's worth noting that if we assume that the copy model has enough capacity, then Equation~\ref{eq:step2} can be simplified for a given iteration $t$ as follows
    
    \begin{flalign}
    \underset{\theta}{\text{minimize}}  &\quad \Omega(\theta)\\
    \text{subject to} & \quad \|R^{\mathcal{F}}_{emp}(f_{\mathcal{C}}(\theta),f_{\mathcal{O}})\|\Bigg |_{S = S^*_t}<\varepsilon. \nonumber
    \end{flalign}

Having the above simplification in mind, our proposed scheme, outlined in Algorithm \ref{alg:retraining}, aims to control the copy model's capacity while minimizing the loss function. It iteratively solves the parameter optimization problem in stages, ensuring that the empirical risk decreases at each step. At iteration $t$, the copy model starts with a small capacity $\Omega$ and solves the optimization problem for increasing upper bounds $\epsilon_k$ of the target value $\varepsilon$. For increasing values of $k$, we increase the capacity and reduce the upper bound. The larger the capacity, the more the empirical risk decreases and the smaller the complexity of the approximation to the target value.

In practice, we train copies using the loss function in Equation \ref{eq:emprisk}. Using stochastic subgradient optimization models (neural networks with "Stochastic Gradient Descent"), we control model complexity with an early stopping criterion. Thus, delaying early stopping increases model complexity. To improve convergence, hyperparameters are updated at each iteration.

\begin{algorithm}[!t]
\caption{Empirical risk minimization implementation}
\label{alg:retraining}
\begin{algorithmic}[1]
\State \textbf{Input}: \textbf{Sample Set} $S$, \textbf{float} $\varepsilon$,\textbf{Classifier} $f_\mathcal{O}$
\State \textbf{Ouptut}: Copy parameters at step $k$
\State $k \gets 0$, $\epsilon_k \gets C, \; C>>\varepsilon$ \Comment{$C$ takes an arbitrarily large value.}

\While{$R^{\mathcal{F}}_{emp}(f_{\mathcal{C}}(\theta_k),f_{\mathcal{O}};\Omega)\geq \varepsilon$ }
    \State $\theta^*_k \gets \underset{\theta}{\arg\min}\; R^{\mathcal{F}}_{emp}(f_{\mathcal{C}}(\theta),f_{\mathcal{O}};\Omega)$ 
    subject to 
    $R^{\mathcal{F}}_{emp}(f_{\mathcal{C}}(\theta),f_{\mathcal{O}};\Omega)\geq \epsilon_k \Bigg |_{\theta^0_k = \theta^*_{k-1}}$
    \State $k \gets k + 1$
    \State $\epsilon_k \gets \min(\epsilon_{k-1}/2,\varepsilon)$ \Comment{Reduce the value of epsilon.}
    \State Increase $\Omega$
   
\EndWhile

\State \Return $\theta^*_k$

\end{algorithmic}
\end{algorithm}

\subsubsection{Forcing the model to remember}\label{lab:forgetting}

With the previous scheme in mind, we revisit the argument in Section~\ref{lab:filtering}. As noted, the sample removal policy conflicts with some of the assumptions made in the theorems introduced in Section~\ref{lab:theorems}, leading to a forgetting effect in Step 2. To restore the convergence properties, we first point out that, as shown by Theorem~\ref{theorem:params_convergence}, parameters converge to their optimal value in the sequential approach. This implies that the difference between two consecutive terms in the sequence also converges to zero, such that

\begin{equation}
    \big|\big| \theta_{t+1}^*-\theta_t^* \big|\big| \longrightarrow 0.
    \label{eq:params_conv_regularization}
\end{equation}

The asymptotic invariant in Equation~\ref{eq:params_conv_regularization} can be forced to preserve the compressed model obtained from previous iterations even after filtering the data points. We add a regularization term to the loss function at iteration $t$, which minimizes the left term

\begin{equation}
    \mathcal{L}_t = R^{\mathcal{F}}_{emp}(f_{\mathcal{C}}(\theta_t),f_{\mathcal{O}};\Omega)+ \lambda\cdot \| \theta_t-\theta_{t-1}^*\|
\end{equation}

This regularization term originates from the derived theorems and can also be found in the literature under the name of Elastic Weight Consolidation (EWC), though derived heuristically from a different set of assumptions \citep{Kirkpatrick2017}. Algorithm~\ref{alg:retraining_regu} outlines the implementation of this strategy.

\begin{algorithm}[!t]
\caption{Memory aware empirical risk minimization implementation}
\label{alg:retraining_regu}
\begin{algorithmic}[1]
\State \textbf{Input}: \textbf{Sample Set} $S$, \textbf{float} $\varepsilon$,\textbf{Classifier} $f_\mathcal{O}$, \textbf{Parameters} $\theta^*_{t-1}$
\State \textbf{Output}: Copy parameters at step $k$
\State $k \gets 0$, $\epsilon_k \gets C, \; C>>\varepsilon$ \Comment{$C$ takes an arbitrarily large value.}

\While{$R^{\mathcal{F}}_{emp}(f_{\mathcal{C}}(\theta_j),f_{\mathcal{O}};\Omega)\geq \varepsilon$ }
    \State \begin{flalign*}
        \theta^*_k \gets \underset{\theta}{\arg\min}\; R^{\mathcal{F}}_{emp}(f_{\mathcal{C}}(\theta),f_{\mathcal{O}};\Omega)+ \lambda\cdot \| \theta-\theta_{t-1}^* \|  \\
        \text{subject to }R^{\mathcal{F}}_{emp}(f_{\mathcal{C}}(\theta),f_{\mathcal{O}};\Omega)\geq \epsilon_k \Bigg |_{\theta^0_k = \theta^*_{k-1}}
    \end{flalign*}
    \State $k \gets k + 1$
    \State $\epsilon_k \gets \min(\epsilon_{k-1}/2,\varepsilon)$ \Comment{Reduce the value of epsilon}
    \State Increase $\Omega$
\EndWhile

\State \Return $\theta^*_k$

\end{algorithmic}
\end{algorithm}

Continuing with the spirals example, the experimental results for the sequential copying process with four different $\lambda$ values and a threshold $\delta=10^{-8}$ are shown in Figure~\ref{fig:fixed_lambda} (see also Figure~\ref{fig:drooping_no_lambda} for comparison). As before, Figure~\ref{fig:fixed_lambda}a) reports the change in accuracy for increasing iterations, while Figures~\ref{fig:fixed_lambda}b) and c) present the results for the uncertainty $\rho$ and the number of data points used $nN$. Again, we can combine the information from all three panels to better understand the behavior displayed by the copy model when \textit{forced to remember}. Initially, the model has not been trained, so the number of points increases while the uncertainty decreases. Once the removal threshold $\delta$ is reached, the regularization term becomes relevant. For low $\lambda$ values, the $\rho$-term of Equation~\ref{eq:params_conv_regularization} dominates the cost function, and the optimization process learns the new data points. As a result, the copy parameters adapt to new data points and the model exhibits forgetting. Conversely, for \textit{large} $\lambda$ values, the regularization term will dominate the optimization. The model is therefore forced to retain more data points to ensure the $\rho$ term can compete with the regularization term. In this setting, the number of data points still grows, but in a sub-linear way. The most desirable behavior is a constant number of points during the training process. For this particular example, we observe this behavior for $\lambda = 0.05$ and $\lambda = 0.1$.

\begin{figure}[ht]
  \centering
  \subfloat[]{ \includegraphics[width=0.47\columnwidth]{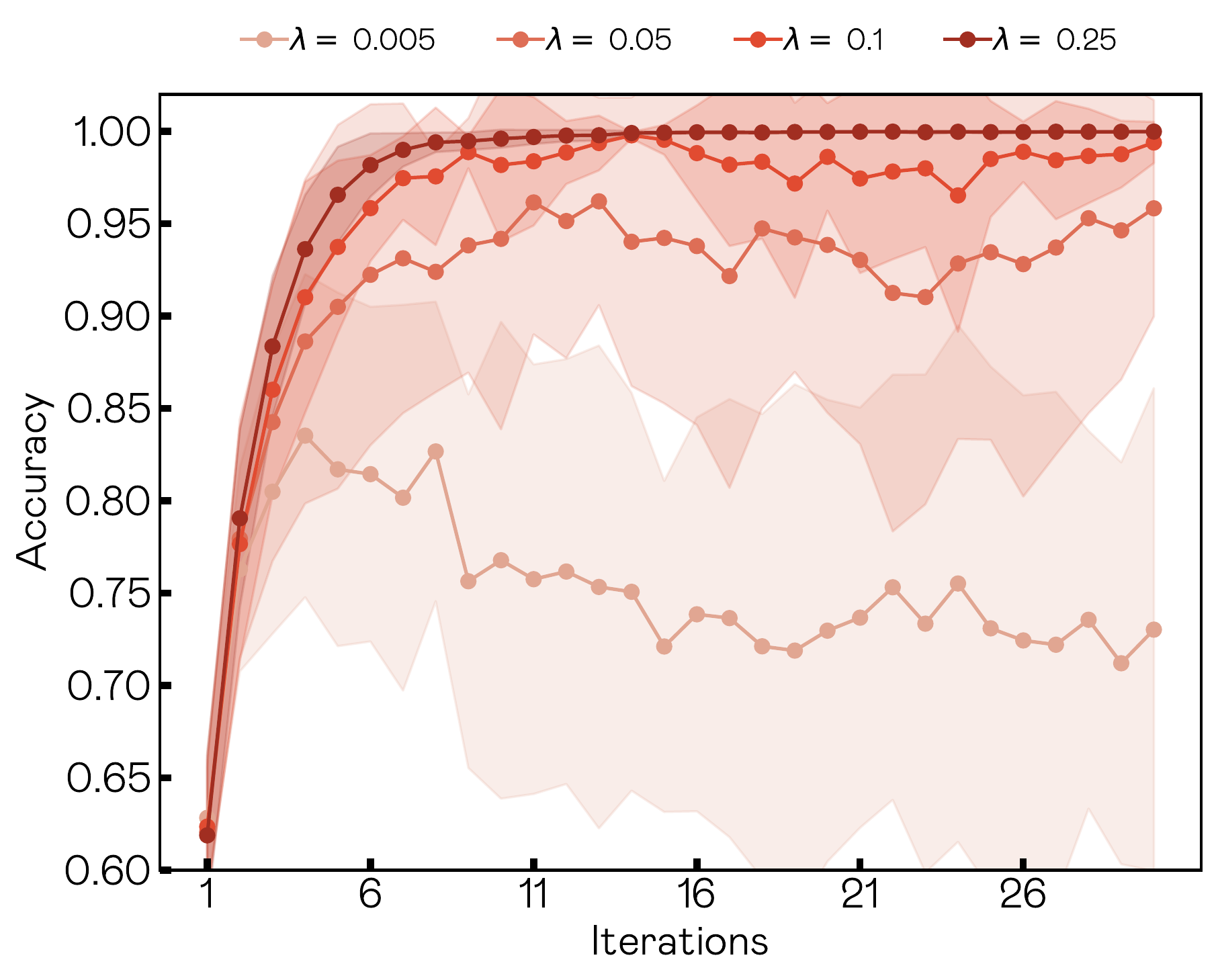}}
  \subfloat[]{ \includegraphics[width=0.47\columnwidth]{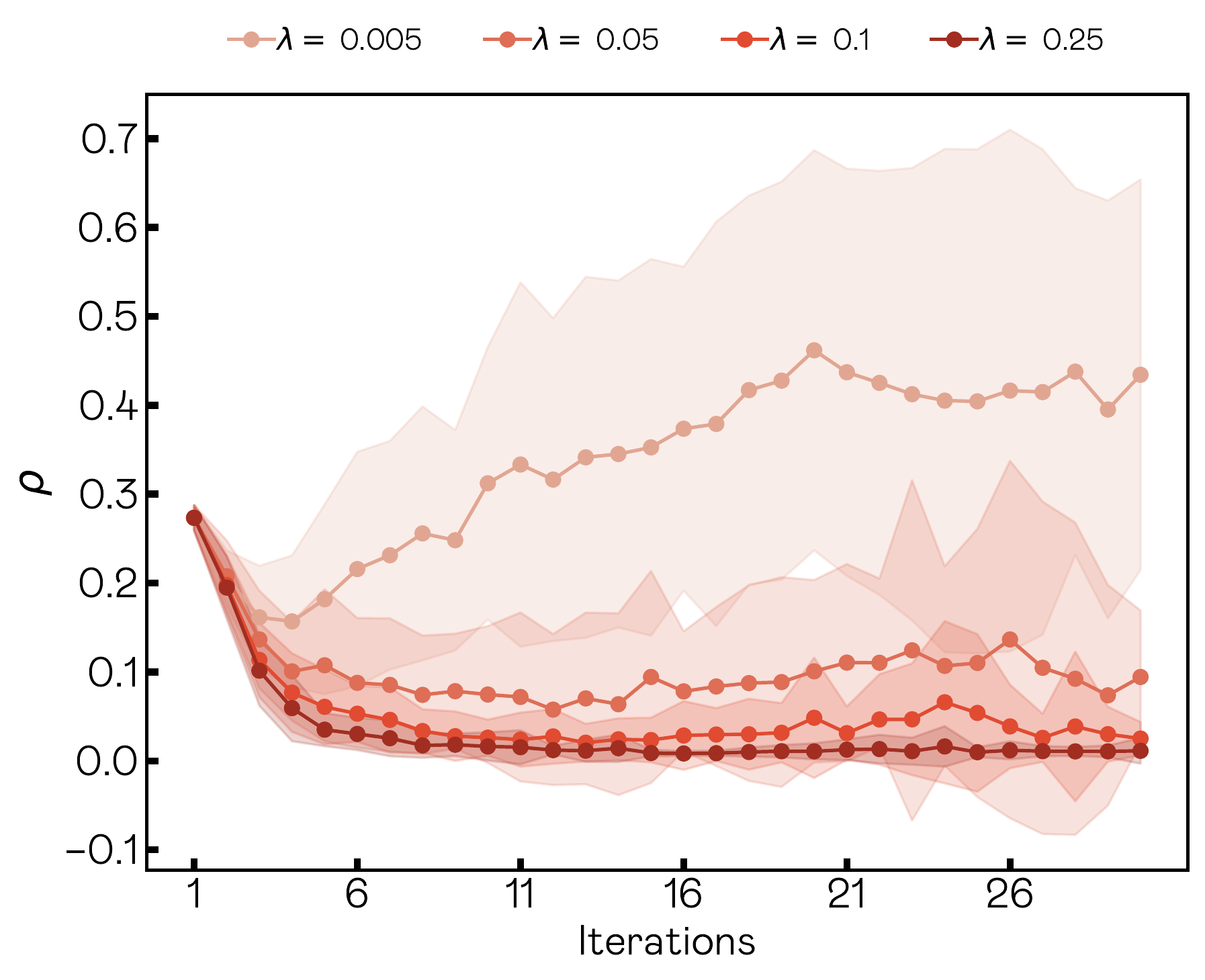}}
  
  \subfloat[]{ \includegraphics[width=0.47\columnwidth]{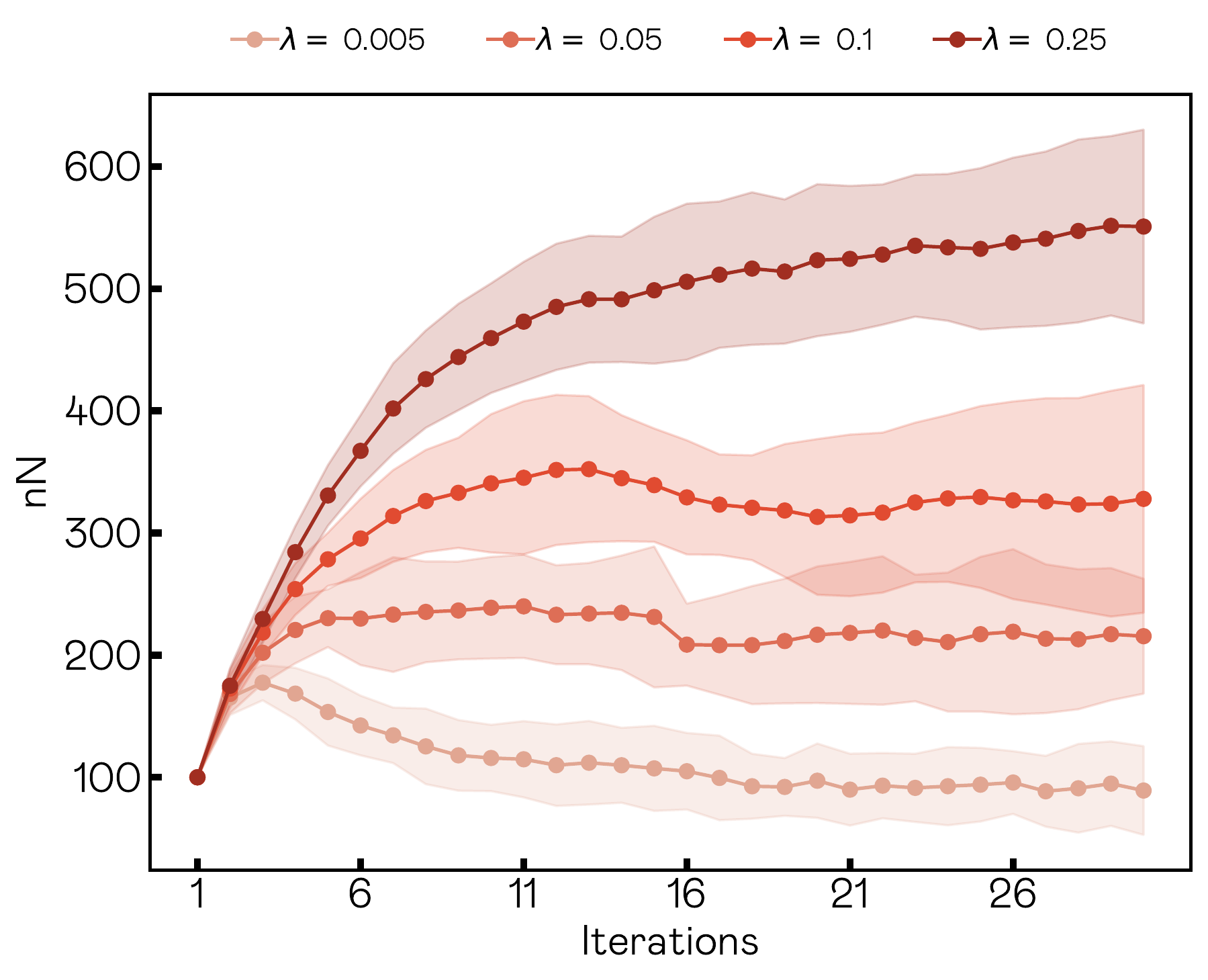}}
    \caption{a) Copy accuracy averaged over 30 independent runs for four different lambda values: $\lambda = 0.005,0.05,0.1,0.25$ with the same drooping threshold $\delta = 10^{-8}$. b) Average uncertainty at each iteration. c) Number of data points used at each iteration. Shaded region shows $\pm\sigma$} \label{fig:fixed_lambda}
\end{figure}




\subsubsection{Automatic Lambda}\label{lab:lambda}

In the previous section, we observed an increment in the number of data points when using large lambda values, to compensate for the memorization effect introduced by the regularization term. This is to consolidate the previous knowledge acquired by the model. In contrast, small lambda values promote short-lived data compression. This is desirable at the beginning of the learning process or when the data distribution suffers a shift. To retain the best of both regimes, we propose a heuristic to dynamically adapt the $\lambda$ value to the needs of the learning process. Our underlying intuition is that, whenever the amount of data required increases, the memory term prevents the copy from adapting to new data. This signals that the value of $\lambda$ must decrease. Equivalently, when we observe a decrement in the number of data points, this means that the model can classify most of them correctly. This indicates that we must stabilize it to avoid unnecessary model drift due to future disturbances in the data. Hence, we must increase the $\lambda$ parameter. 

Thus, considering the set of data at iteration $t$, $S_t$, we force the described behavior by automatically updating the value of $\lambda$ parameter as follows

$$
\lambda=\left\{\begin{tabular}{ll}
$\lambda/2$ & if $\quad |S_t|\geq|S_{t-1}|$,\\ $1.5\cdot\lambda$ & \text{otherwise.}\\
\end{tabular}\right.$$

The updated optimization process, including the modifications discussed, is presented in Algorithm~\ref{alg:sequential_final_lambda}, the final algorithm proposed in this article. Our implementation models data trends by computing the difference between the number of points in the previous iteration and the number of points after data filtering. The filtering process occurs at the beginning of each iteration after generating a new set of $n$ samples.

\begin{algorithm}[!t]
\caption{Sequential approach with alternating optimization}
\label{alg:sequential_final_lambda}
\begin{algorithmic}[1]
\State \textbf{Input:} \textbf{int} $T$, \textbf{int} $n$, \textbf{float} $\varepsilon$, \textbf{Classifier} $f_\mathcal{O}$
\State \textbf{Output}: Optimal copy parameters
\State $S_0 \gets \{(z_j, f_\mathcal{O}(z_j)) \;|\;  z_j \sim P_\mathcal{Z}, j=1\dots n \}$
\State $\theta^*_0 \gets \underset{\theta}{\arg\min} \frac{1}{\big|S_0\big|} \sum\limits_{z\in S_0} \rho^2(z,\theta)$
\For{$t=1$ to $T$}
\State $S_t^* \gets$ Step1($S^*_{t-1}$, $n$, $\delta$, $P_\mathcal{Z}$, $\rho()$, $f_\mathcal{O}$, $f_\mathcal{C}(\theta_{t-1}^*)$)\Comment{Algorithm \ref{alg:filtering}}
 
\State $\lambda \gets\left\{\begin{tabular}{ll}
$\lambda/2$ & if $\quad |S_t|\geq|S_{t-1}|$,\\ $1.5\cdot\lambda$ & \text{otherwise.}\\
\end{tabular}\right.$ \Comment{Adaptive lambda}
   \State $\theta_t^* \gets $Step2($S_t$, $\varepsilon$,$f_\mathcal{O}$, $\theta^*_{t-1}$)\Comment{Algorithm \ref{alg:retraining_regu}}
\EndFor

\State \Return $\theta^*_t$

\end{algorithmic}
\end{algorithm}

As before, we show how this improvement works for the ~\textit{spirals} example. We repeat the experiments using Algorithm~\ref{alg:sequential_final_lambda} with $n=100$ and three different dropping thresholds: $\delta=10^{-6},10^{-8}$ and $10^{-10}$ as we did in Subsection~\ref{lab:filtering} where the dropping procedure was first introduced. Recall that, as discussed in Figure \ref{fig:drooping_no_lambda} only the setting corresponding to the smallest $\delta$ value managed to perform better than the online approach, yet still performed worse than the pure sequential implementation. The remaining thresholds lost too many data points, and their performance was worse than the online approach when the number of data points used was similar. The results obtained when implementing the full Algorithm~\ref{alg:sequential_final_lambda} are depicted in Figure~\ref{fig:automatic_lambda}. The accuracy level obtained using the automatic regularization term is comparable with the desired optimal \textit{pure sequential} case, where the model keeps all data points. Even for the most conservative approach, where we use a very small dropping threshold, the number of points used after 30 iterations is $1/3$ smaller than those required for the pure sequential setting. Moreover, even when deliberately forcing a very volatile setup by using a large value of delta, $\delta=10^{-6}$, the obtained results exhibit an accuracy larger than $0.95$ for an almost constant number of data points equal to $200$.

\begin{figure}[!ht]
  \centering
  \subfloat[]{ \includegraphics[width=0.47\columnwidth]{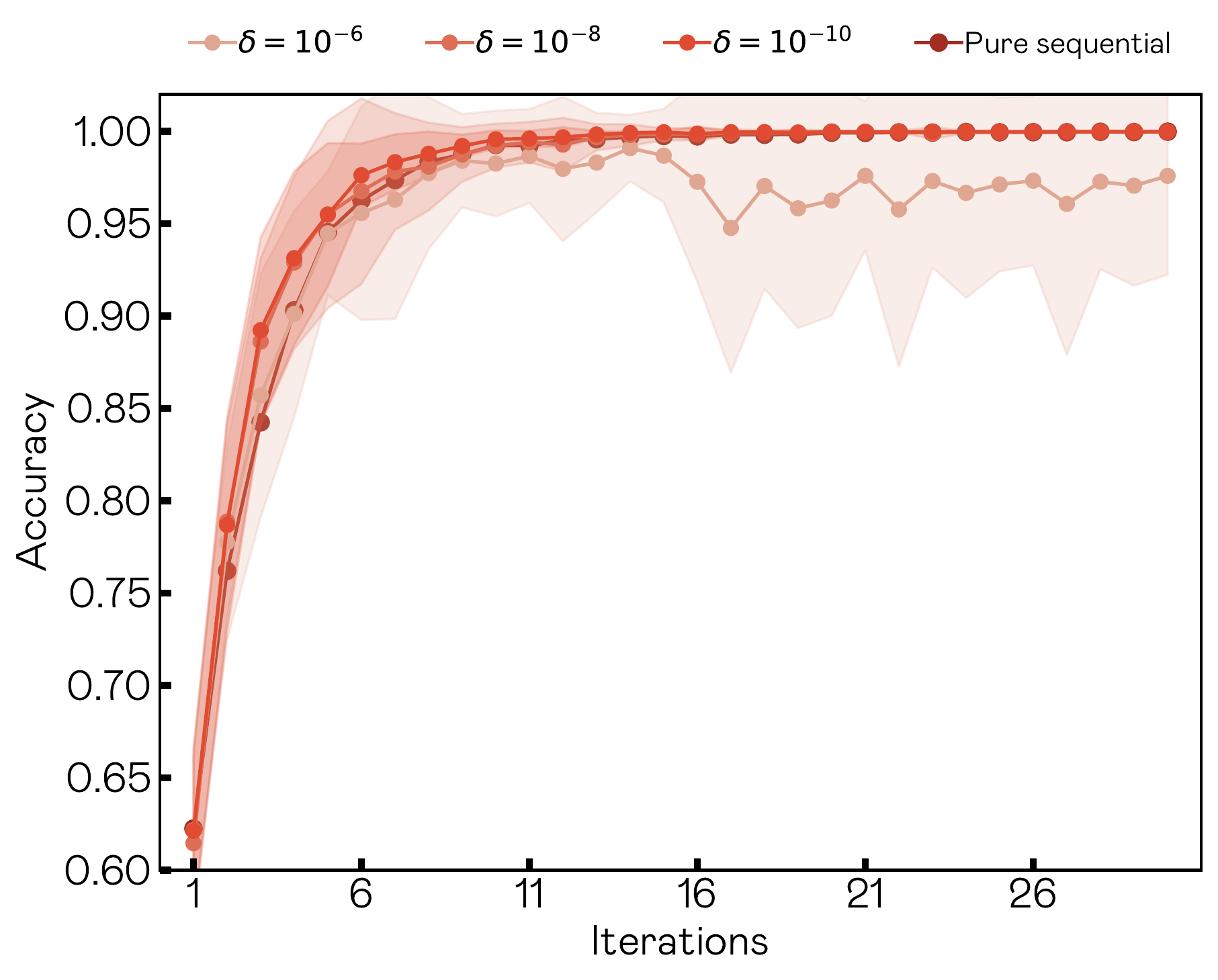}}
  \subfloat[]{ \includegraphics[width=0.47\columnwidth]{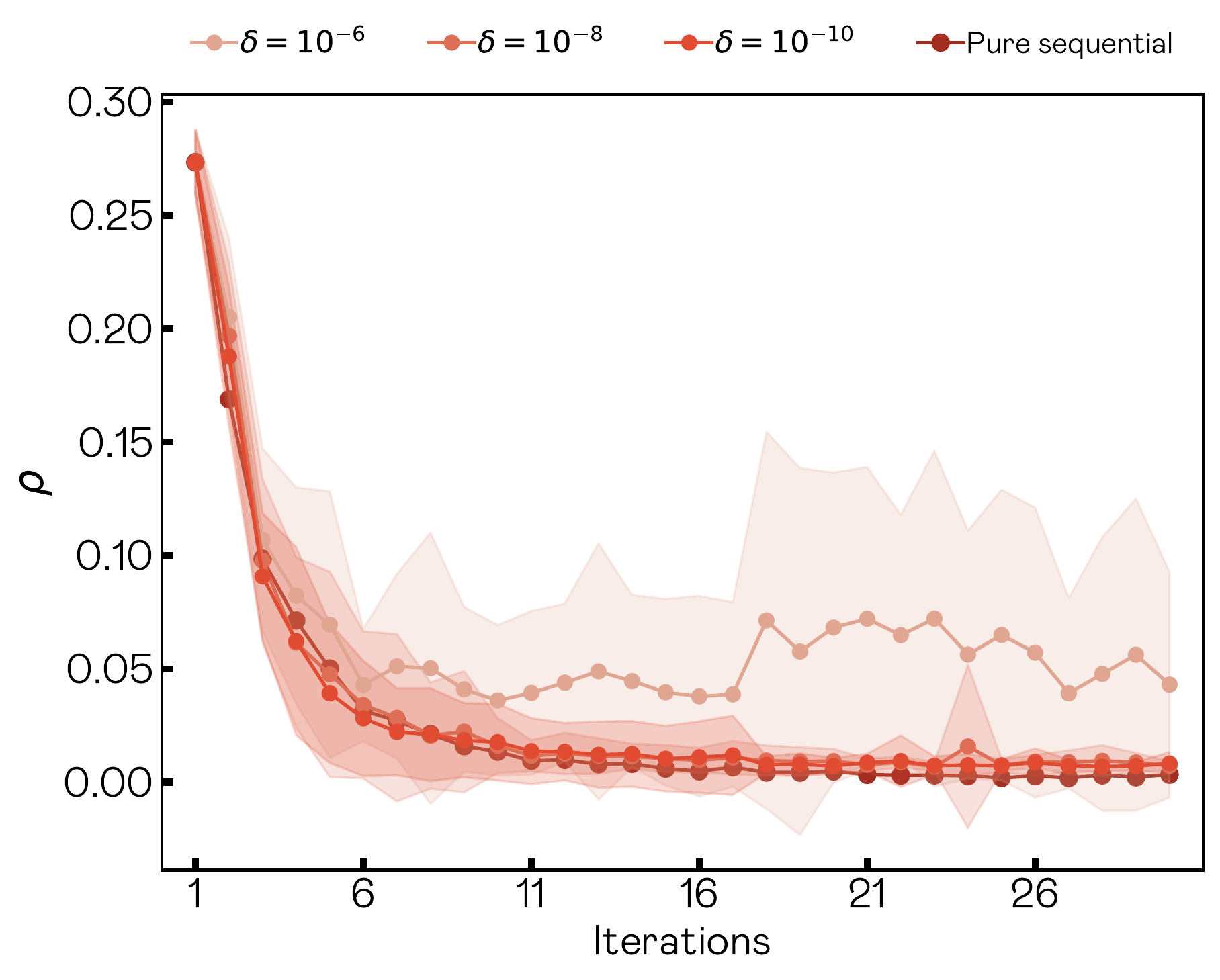}}
  
  \subfloat[]{ \includegraphics[width=0.47\columnwidth]{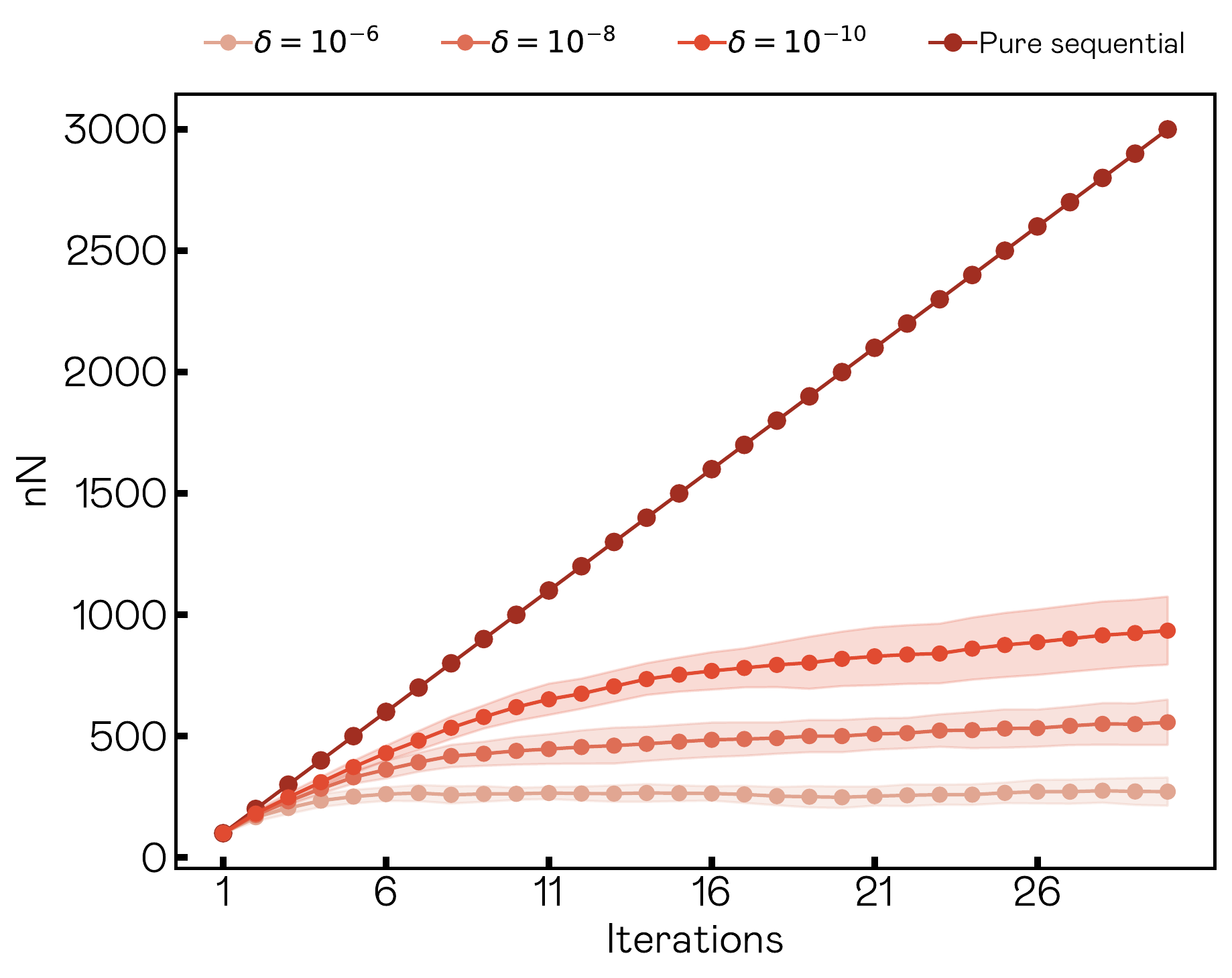}}
    \caption{a) Copy accuracy averaged over 30 independent runs for three different drooping thresholds: $10^{-6},\, 10^{-8}$, and $10^{-10}$. b) Average uncertainty at each iteration. c) Number of data points used at each iteration. Shaded region shows $\pm\sigma$} \label{fig:automatic_lambda}
\end{figure}


\section{Experiments}
\label{sec:experiments}
We present the results of our experimental study on the sequential copy approach applied to a set of heterogeneous problems. Our results are analyzed using various performance metrics and compared to the single-pass approach in both one-shot and online settings. Before presenting the results, we provide a clear and reproducible description of the data and experimental setup.

\subsection{Experimental set-up}

{\bf Data.}
We use 58 datasets from the UCI Machine Learning Repository database~\citep{Dheeru2017UCIRepository} and follow the experimental methodology outlined in~\citep{ref:Unceta:2020}. For a more detailed explanation of the problem selection and data preparation process, we refer the reader to the mentioned article.\newline

\noindent
{\bf Pre-processing.}
We convert all nominal attributes to numerical and rescale variables to have zero mean and unit variance. We split the pre-processed data into stratified 80/20 training and test sets. We sort the datasets in alphabetical order and group them in sets of 10, and assign each group one of the following classifiers:  AdaBoost (\textit{\textit{adaboost}}), artificial neural network (\textit{\textit{ann}}), random forest (\textit{rfc}), linear SVM (\textit{linear\_svm}), radial basis kernel SVM (\textit{rbf\_svm}) and gradient-boosted tree (\textit{xgboost}). We train all models using a $3$-fold cross-validation over a fixed parameter grid. A full description of the 58 datasets, including general data attributes, and their assigned classifier, can be found in Table~\ref{tab:UCI_description}.\newline

\noindent
{\bf Models.}
We copy the resulting original classifiers using a fully-connected neural network with three hidden layers, each consisting of 64, 32, and 10 \textit{ReLu} neurons and a SoftMax output layer. We use no pre-training or drop-out and initialize the weights randomly. We implement the sequential copy process as described in Algorithm 5 above.\newline

\noindent
{\bf Parameter setting.}
We train these models sequentially for 30 iterations. At each iteration $t$, we generate $n=100$ new data points by randomly sampling a standard normal distribution $\mathcal{N}(0,1)$. We use Algorithm \ref{alg:sequential_final_lambda}, discarding any data points for which the instantaneous copy model has an uncertainty $\rho$ below a defined threshold $\delta$. We adjust the weights at each iteration using the \textit{Adam} optimizer with a learning rate of $5 \cdot 10^{-4}$. For each value of $t$, we use 1000 epochs with balanced batches of 32 data points. We use the previously defined normalized uncertainty average as the loss function and evaluate the impact of the $\delta$ parameter by running independent trials for $\delta \in \{ 5\cdot 10^{-4}, 10^{-4}, 5\cdot 10^{-5}, 10^{-5}, 5\cdot 10^{-6}, 10^{-6}, 5\cdot 10^{-7}, 10^{-7}, 5\cdot 10^{-8}, 10^{-8}, 10^{-9}, 10^{-10}\}$. Additionally, we allow the $\lambda$ parameter to be updated automatically, starting from a value of $0.5$. \newline

\noindent
{\bf Hardware.}
We perform all experiments on a server with 28 dual-core AMD EPYC 7453 at 2.75 GHz, and equipped with 768 Gb RDIMM/3200 of RAM. The server runs on Linux 5.4.0. We implement all experiment in Python 3.10.4 and train copies using TensorFlow 2.8. For validation purposes, we repeat each experiment 30 times and present the average results of all repetitions.

\begin{table}[h!]
\centering
\tiny
    \begin{tabular}{@{}lccccc}
    \toprule
    \\[-1em]
    \scriptsize{Dataset} &
    \centering\scriptsize{Classes} & \centering\scriptsize{Samples} & \centering\scriptsize{Features} &
    \centering\scriptsize{Original} & 
    $\mathcal{A}_\mathcal{O}$\\
    \\[-0.5em]
    \cline{1-6} 
    \\[-0.75em]
$\text{abalone}$	&	3	&	4177	&	8	&	$\textit{adaboost}$	&	0.545	\\
$\text{acute-inflammation}$	&	2	&	120	&	6	&	$\textit{adaboost}$	&	1.0	\\
$\text{acute-nephritis}$	&	2	&	120	&	6	&	$\textit{adaboost}$	&	1.0	\\
$\text{bank}$	&	2	&	4521	&	16	&	$\textit{adaboost}$	&	0.872	\\
$\text{breast-cancer-wisc-diag}$	&	2	&	569	&	30	&	$\textit{adaboost}$	&	0.921	\\
$\text{breast-cancer-wisc-prog}$	&	2	&	198	&	33	&	$\textit{adaboost}$	&	0.7	\\
$\text{breast-cancer-wisc}$	&	2	&	699	&	9	&	$\textit{adaboost}$	&	0.914	\\
$\text{breast-cancer}$	&	2	&	286	&	9	&	$\textit{adaboost}$	&	0.69	\\
$\text{breast-tissue}$	&	6	&	106	&	9	&	$\textit{adaboost}$	&	0.545	\\
$\text{chess-krvkp}$	&	2	&	3196	&	36	&	$\textit{ann}$	&	0.995	\\
$\text{congressional-voting}$	&	2	&	435	&	16	&	$\textit{ann}$	&	0.609	\\
$\text{conn-bench-sonar-mines-rocks}$	&	2	&	208	&	60	&	$\textit{ann}$	&	0.833	\\
$\text{connect-4}$	&	2	&	67557	&	42	&	$\textit{ann}$	&	0.875	\\
$\text{contrac}$	&	3	&	1473	&	9	&	$\textit{ann}$	&	0.573	\\
$\text{credit-approval}$	&	2	&	690	&	15	&	$\textit{ann}$	&	0.79	\\
$\text{cylinder-bands}$	&	2	&	512	&	35	&	$\textit{ann}$	&	0.777	\\
$\text{echocardiogram}$	&	2	&	131	&	10	&	$\textit{ann}$	&	0.815	\\
$\text{energy-y1}$	&	3	&	768	&	8	&	$\textit{ann}$	&	0.974	\\
$\text{energy-y2}$	&	3	&	768	&	8	&	$\textit{ann}$	&	0.922	\\
$\text{fertility}$	&	2	&	100	&	9	&	$\textit{random\_forest}$	&	0.9	\\
$\text{haberman-survival}$	&	2	&	306	&	3	&	$\textit{random\_forest}$	&	0.613	\\
$\text{heart-hungarian}$	&	2	&	294	&	12	&	$\textit{random\_forest}$	&	0.763	\\
$\text{hepatitis}$	&	2	&	155	&	19	&	$\textit{random\_forest}$	&	0.742	\\
$\text{ilpd-indian-liver}$	&	2	&	583	&	9	&	$\textit{random\_forest}$	&	0.615	\\
$\text{ionosphere}$	&	2	&	351	&	33	&	$\textit{random\_forest}$	&	0.944	\\
$\text{iris}$	&	3	&	150	&	4	&	$\textit{random\_forest}$	&	0.933	\\
$\text{magic}$	&	2	&	19020	&	10	&	$\textit{linear\_svm}$	&	0.801	\\
$\text{mammographic}$	&	2	&	961	&	5	&	$\textit{random\_forest}$	&	0.803	\\
$\text{miniboone}$	&	2	&	130064	&	50	&	$\textit{random\_forest}$	&	0.936	\\
$\text{molec-biol-splice}$	&	3	&	3190	&	60	&	$\textit{random\_forest}$	&	0.944	\\
$\text{mushroom}$	&	2	&	8124	&	21	&	$\textit{linear\_svm}$	&	0.979	\\
$\text{musk-1}$	&	2	&	476	&	166	&	$\textit{linear\_svm}$	&	0.812	\\
$\text{musk-2}$	&	2	&	6598	&	166	&	$\textit{linear\_svm}$	&	0.958	\\
$\text{oocytes\_merluccius\_nucleus\_4d}$	&	2	&	1022	&	41	&	$\textit{linear\_svm}$	&	0.771	\\
$\text{oocytes\_trisopterus\_nucleus\_2f}$	&	2	&	912	&	25	&	$\textit{linear\_svm}$	&	0.803	\\
$\text{parkinsons}$	&	2	&	195	&	22	&	$\textit{linear\_svm}$	&	0.923	\\
$\text{pima}$	&	2	&	768	&	8	&	$\textit{linear\_svm}$	&	0.721	\\
$\text{pittsburg-bridges-MATERIAL}$	&	3	&	106	&	7	&	$\textit{linear\_svm}$	&	0.909	\\
$\text{pittsburg-bridges-REL-L}$	&	3	&	103	&	7	&	$\textit{linear\_svm}$	&	0.667	\\
$\text{pittsburg-bridges-T-OR-D}$	&	2	&	102	&	7	&	$\textit{rbf\_svm}$	&	0.857	\\
$\text{planning}$	&	2	&	182	&	12	&	$\textit{rbf\_svm}$	&	0.703	\\
$\text{ringnorm}$	&	2	&	7400	&	18	&	$\textit{rbf\_svm}$	&	0.983	\\
$\text{seeds}$	&	3	&	210	&	7	&	$\textit{rbf\_svm}$	&	0.881	\\
$\text{spambase}$	&	2	&	4601	&	57	&	$\textit{rbf\_svm}$	&	0.926	\\
$\text{statlog-australian-credit}$	&	2	&	690	&	14	&	$\textit{rbf\_svm}$	&	0.681	\\
$\text{statlog-german-credit}$	&	2	&	1000	&	24	&	$\textit{rbf\_svm}$	&	0.765	\\
$\text{statlog-heart}$	&	2	&	270	&	13	&	$\textit{rbf\_svm}$	&	0.852	\\
$\text{statlog-image}$	&	7	&	2310	&	18	&	$\textit{rbf\_svm}$	&	0.952	\\
$\text{statlog-vehicle}$	&	4	&	846	&	18	&	$\textit{xgboost}$	&	0.765	\\
$\text{synthetic-control}$	&	6	&	600	&	60	&	$\textit{rbf\_svm}$	&	1.0	\\
$\text{teaching}$	&	3	&	151	&	5	&	$\textit{xgboost}$	&	0.548	\\
$\text{tic-tac-toe}$	&	2	&	958	&	9	&	$\textit{xgboost}$	&	0.974	\\
$\text{titanic}$	&	2	&	2201	&	3	&	$\textit{xgboost}$	&	0.778	\\
$\text{twonorm}$	&	2	&	7400	&	20	&	$\textit{xgboost}$	&	0.976	\\
$\text{vertebral-column-2clases}$	&	2	&	310	&	6	&	$\textit{xgboost}$	&	0.839	\\
$\text{vertebral-column-3clases}$	&	3	&	310	&	6	&	$\textit{xgboost}$	&	0.806	\\
$\text{waveform-noise}$	&	3	&	5000	&	40	&	$\textit{xgboost}$	&	0.843	\\
$\text{waveform}$	&	3	&	5000	&	21	&	$\textit{xgboost}$	&	0.843	\\
$\text{wine}$	&	3	&	178	&	11	&	$\textit{xgboost}$	&	0.944	\\
\\[-1em]
    \bottomrule
    \end{tabular}
\caption{Description of datasets.} \label{tab:UCI_description}
\end{table}

\subsection{Metrics and performance plots}

We evaluate the copy performance using three metrics, including copy accuracy ($\mathcal{A}_\mathcal{C}$). The copy accuracy is calculated as the accuracy of the copy on the original test set\footnote{Computing this value requires the original test data to be known and accessible. We consider this assumption to be reasonable, as it provides a lower bound for all the other metrics.}. Given a model $f$, we define the copy accuracy as the fraction of correct predictions made by this model on a data set of labelled pairs, $\mathcal{D} = {(x_j,y_j)}$, as:

\begin{equation}
\mathcal{A}_\mathcal{C}^f = \frac{1}{N}\sum\limits{j=1}^N\mathbb{I}[f(x_j) == y_j],
\end{equation}
\noindent
where $N$ is the number of samples and $\mathbb{I}[\text{cond}]$ is the indicator function that returns 1 if the condition is true, i.e., when the model predicts the right outcome. We use the definition above to compare the results obtained when using the sequential approach, $\mathcal{A}_\mathcal{C}^{\text{seq}}$, with those obtained when training copies based on the single-pass approach, $\mathcal{A}_\mathcal{C}^{\text{single}}$.

Another performance metric is the area under the normalized convergence accuracy curve ($conv$). This metric measures the convergence speed of the sequential approach and is defined as the area under the curve of the copy accuracy increment per iteration, normalized by the maximum copy accuracy. The $conv$ value is in the range of 0 to 1 and represents the fraction of time required for the system to reach a steady convergence state. For $T$ iterations, the value of $conv$ is defined as follows
\begin{equation}
\mathit{conv} = \frac{1}{T} \frac{\int_0^T \mathcal{A}_\mathcal{C}^{\text{seq}}(t)\; dt}{\underset{t \in [0,T]} {\max}\mathcal{A}_\mathcal{C}^{\text{seq}}(t)},
\end{equation}

\noindent
where $\mathcal{A}_\mathcal{C}^{\text{seq}}(t)$ corresponds to the copy accuracy increment iteration by iteration.

Intuitively, $\mathit{conv}$ metric measures the time required for the system to reach a steady convergence state. Given $T$ iterations of the sequential approach, a convergence speed of $\mathit{conv}$ means that the algorithm reaches the steady state at step $2 (1 - \mathit{conv}) T$. A $\mathit{conv}$ value of $90\%$ tells us that we can reach convergence as fast as $0.2T$. This is, the system requires only 20\% of the allocated time to converge.
    
Finally, we also introduce the efficiency metric, $\mathit{eff}$. This metric evaluates the computational cost of the copying process in terms of the number of synthetic data points used for training. We compute it by comparing the actual number of points used in the sequential approach with the theoretical number of points that would be used if no sample removal policy was applied. We define $\mathit{eff}$ as:
\begin{equation}
\mathit{eff}=1 - \frac{\int_0^T \eta(t)\quad dt}{\int_0^T n \cdot t \quad dt},
\end{equation}

\noindent
where $\eta(t)$ is the number of points used in the sequential approach at each iteration $t$, as shown in Figure~\ref{fig:automatic_lambda}(c). 

The $\mathit{eff}$ metric models the expected number of samples required for copying. This value can be roughly approximated by $(1-\mathit{eff})/2 $. A $\mathit{eff}$ value of $90\%$ indicates that on average, only $5\%$ of the available data points are used in the process. Note that both the pure sequential approach, where no sample removal policy is used, and the single-pass approach have 0 efficiency because they both use all the available data points for training.

We evaluate the performance of the sequential approach by combining various metrics into a single representation. We consider models with copy accuracy within $5\%$ of the single-pass result ($\mathcal{A}_\mathcal{C}^{\text{seq}}/\mathcal{A}_\mathcal{C}^{\text{single}}|{nT}>0.95$). Out of these configurations, we select the ones with the highest copy accuracy ($\mathcal{A}_\mathcal{C}^{\text{seq}}$), best efficiency ($\mathit{eff}$), and fastest convergence ($\mathit{conv}$), referred to as the \textit{Best accuracy}, \textit{Best efficiency}, and \textit{Best convergence}, respectively. To visualize the results, we present four plots: (1) a comparison of the copy accuracy between the single-pass approach and the \textit{Best accuracy} model, (2) a comparison of the copy accuracy and efficiency between the \textit{Best accuracy} and \textit{Best efficiency} results, (3) a comparison of the copy accuracy and convergence rate between the \textit{Best accuracy} and \textit{Best convergence} configurations, and (4) a demonstration of the relationship between convergence and efficiency. 

\begin{figure}[h!]
  \centering
  \subfloat[]{ \includegraphics[width=0.49\columnwidth]{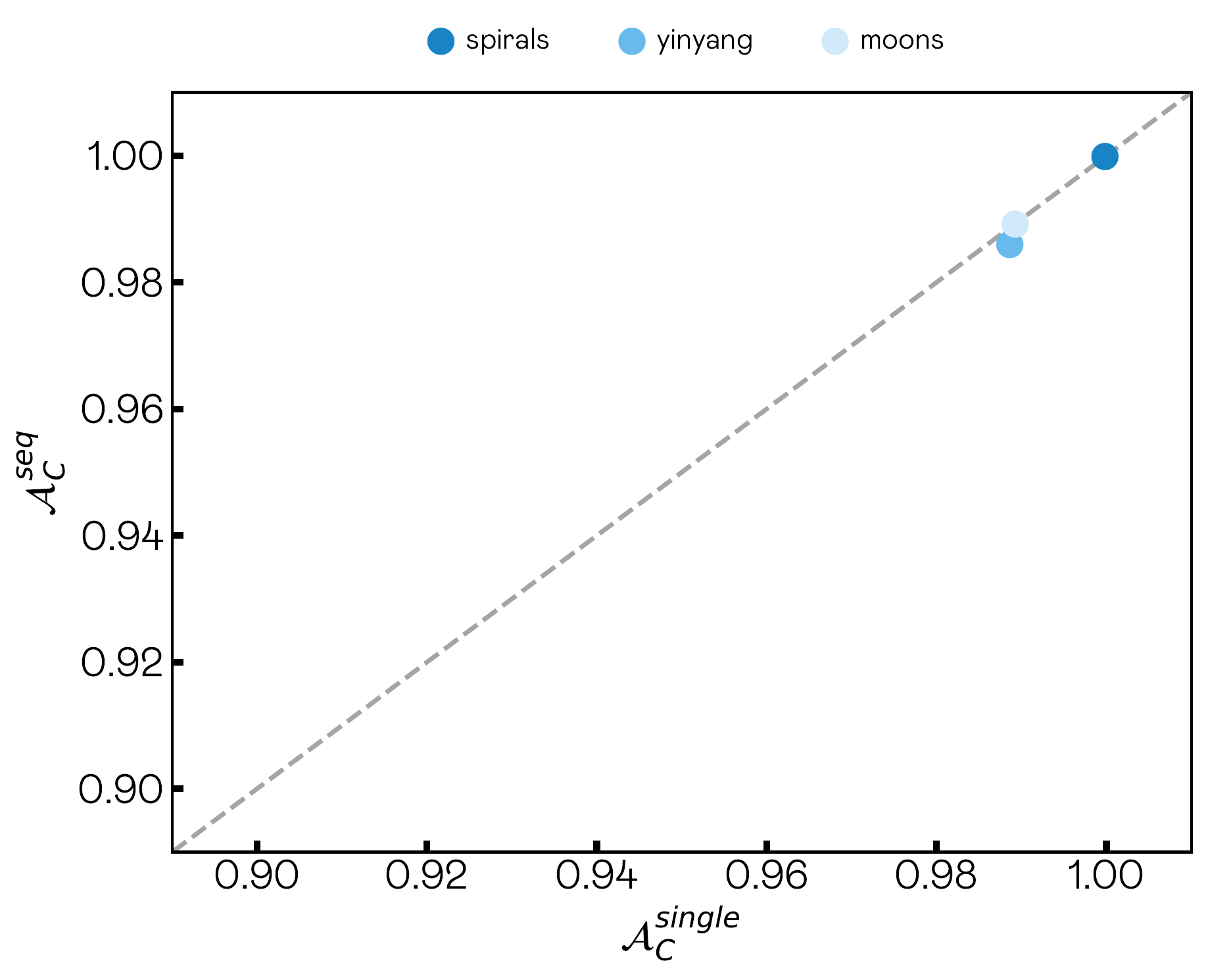}}
  \subfloat[]{ \includegraphics[width=0.49\columnwidth]{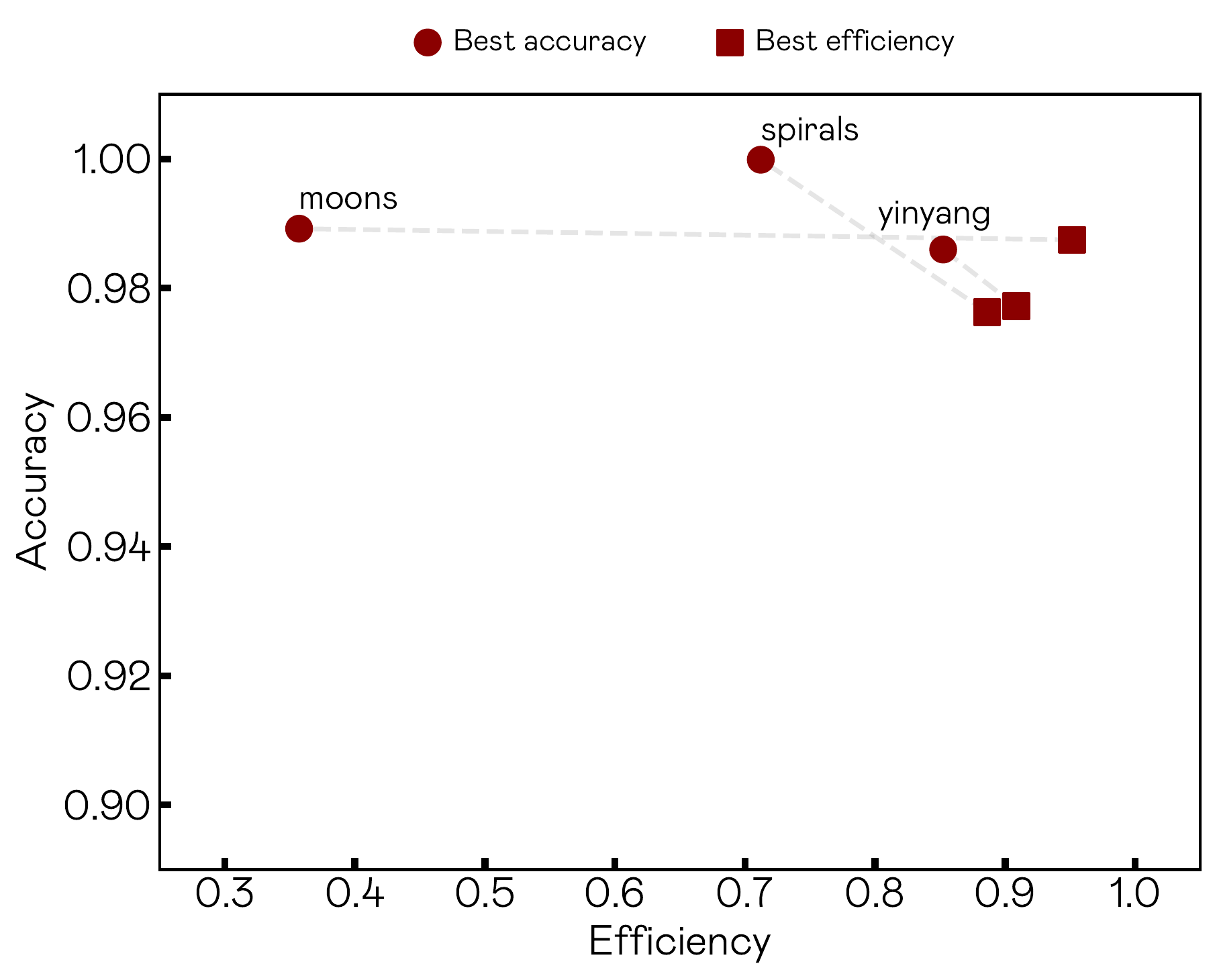}}
  
  \subfloat[]{ \includegraphics[width=0.49\columnwidth]{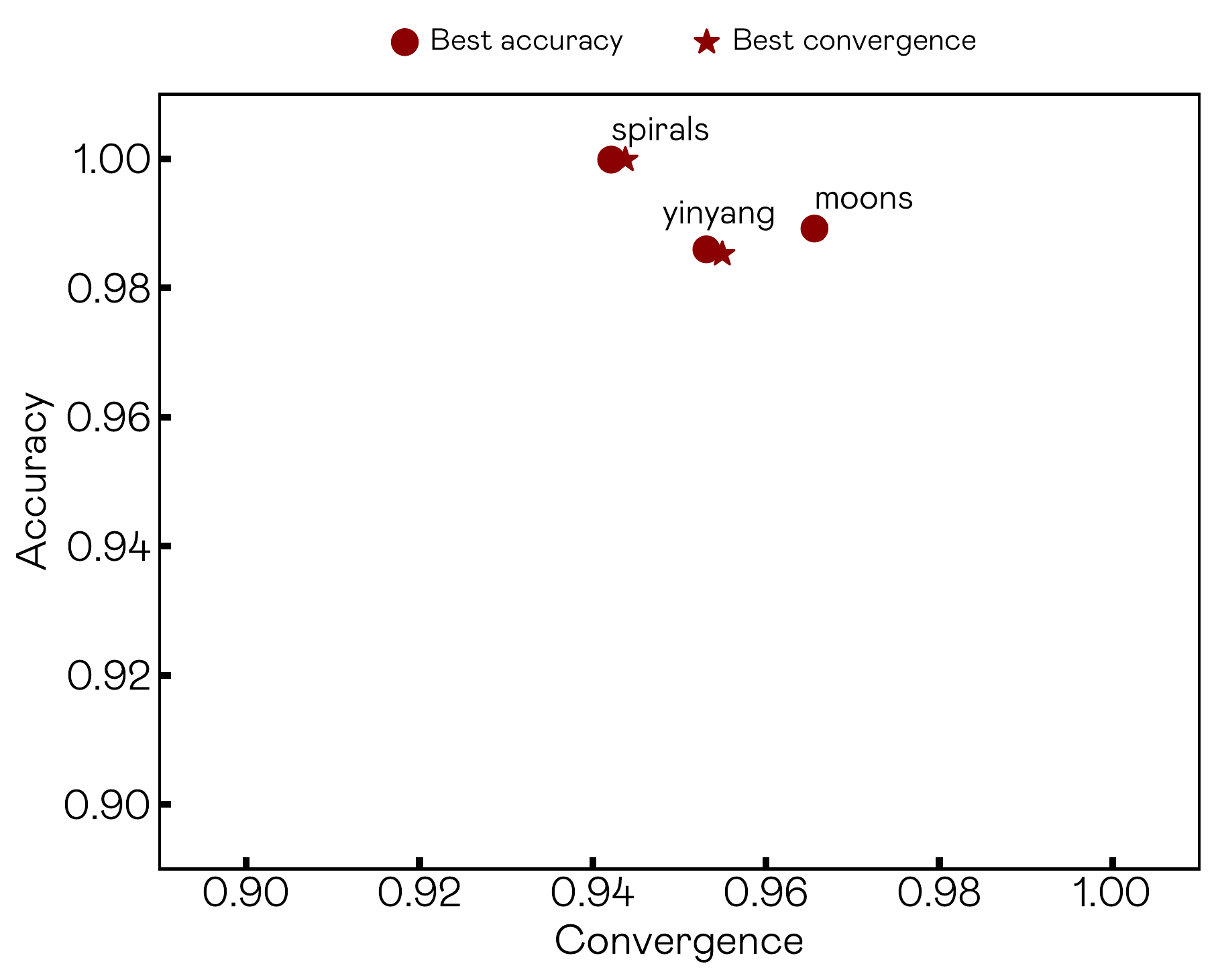}}
  \subfloat[]{ \includegraphics[width=0.49\columnwidth]{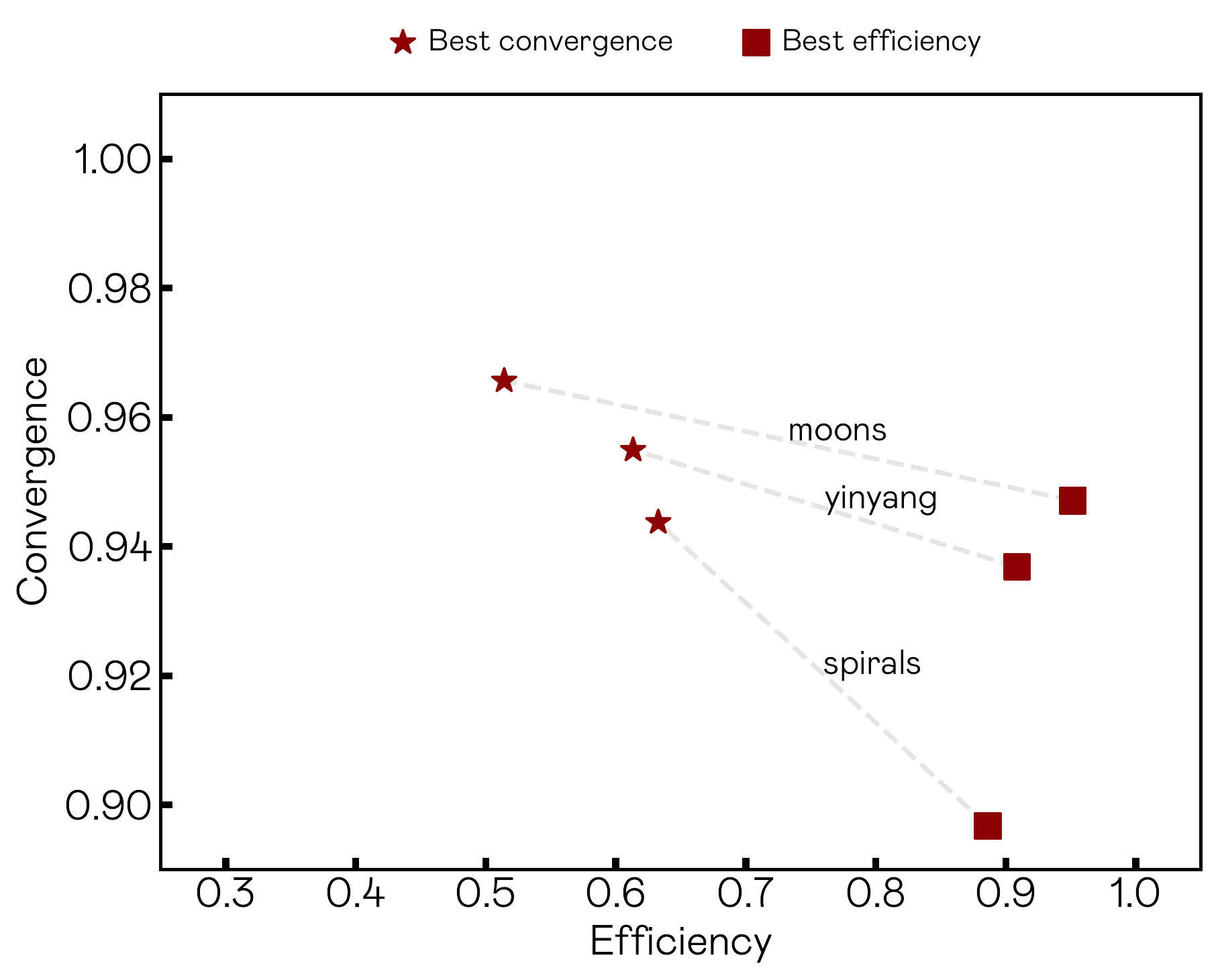}}
  \caption{a) Comparison of accuracy between single-pass and sequential approaches on the \textit{spirals}, \textit{yinyang}, and \textit{moons} datasets. The dashed line marks equal accuracy between the two methods. Points above (below) the line indicate better accuracy for the sequential  (single-pass) approach. b), c), and d) Comparison of \textit{Best accuracy} (circles), \textit{Best efficiency} (squares) and \textit{Best convergence} (stars) operational points for each dataset. The gray dashed lines connect the points of the same dataset and display the linear fit of intermediate solutions.}
\label{fig:toy_combination}
\end{figure}

As an example, Figure~\ref{fig:toy_combination} shows these plots for three toy datasets: \textit{spirals}, \textit{moons}, and \textit{yin-yang}. In Figure~\ref{fig:toy_combination}a), we can compare the performance degradation or improvement of the sequential approach and the single-pass approach. In Figure~\ref{fig:toy_combination}b), a comparison between the gain in efficiency (square marker) and the the best-performing configuration can be seen for each dataset. The most efficient configuration typically displays a significant efficiency gain. Figure~\ref{fig:toy_combination}c) compares the \textit{Best accuracy} and \textit{best convergence} operational points. We observe that both configurations are nearly indistinguishable in terms of both accuracy and convergence, i.e. the most accurate model is also the one which converges faster. Finally, in Figure~\ref{fig:toy_combination}d) we observe that the \textit{Best efficiency} configurations yield a significant improvement in efficiency, while the lose in convergence is not too big.

The biggest gains are therefore obtained in terms of efficiency. This is a relevant results, because it shows that the sequential approach to copying can provide significant advantages for memory usage and computational resource allocation.

\subsection{Results}

We report the metrics for each UCI dataset as introduced above. Table~\ref{tab:results} lists the values of efficiency ($\mathit{eff}$), convergence rate ($\mathit{conv}$), and accuracy $\mathcal{A}_\mathcal{C}^{\text{seq}}$ for the three operational points: {\it Best accuracy}, {\it Best efficiency}, and {\it Best convergence}. These results are compared to the original accuracy ($\mathcal{A}_{O}$) and the one-shot single-pass copy accuracy ($\mathcal{A}_\mathcal{C}^{\text{single}}$).

We observe that not all copies can perfectly reproduce the original accuracy. This effect is observed in 19 datasets for both single-pass and sequential models. Possible reasons include a mismatch between the copy's capacity and the boundary complexity, or a misalignment between the sampling region and the original data distribution. This observation is in line with results in the literature but outside of the scope of this article. Conversely, in 8 datasets, copies outperform the original classifier performance, specifically in 4 cases. This unexpected result may be due to statistical noise and requires further investigation.

The relevant results for our proposal show that the sequential copy process at the {\it Best accuracy} operational point matches the single-pass approach in 53 of the 58 problems, performs worse in 4 datasets and better in 1. On average, the copy process converges in $11.5\%$ of the allotted time with a convergence metric of $\mathit{conv} = 0.942$ and efficiency of $\mathit{eff}=0.716$. This requires an average of $14\%$ of the samples used by the single-pass. A graphic display of the results comparing both approaches is shown in Figure~\ref{fig:uci_plot}. The most notable results have been highlighted in darker colors to ease interpretation. Values plotted in the diagonal correspond to cases where both approaches yield comparable results. Most of the sequential copies recover most of the single-pass accuracy, even when training on smaller synthetic datasets. In some datasets, however, copies based on the sequential approach fall below the diagonal. This effect is observed when the amount of memory ($n$) is not enough to describe the decision boundary entirely. Finally, we observe some cases where points lie above the diagonal, signaling that sequential copies improve over the single-pass results.

The results of the {\it Best efficiency} and {\it Best convergence} operational points have an average accuracy degradation of $5\%$, which is not statistically significant. For the {\it Best efficiency} operational point, 51 datasets have a degradation in performance, but there's a statistically significant improvement in efficiency in 47 datasets (eff=0.882) using only $6\%$ of the data points. The convergence speed increases by $17.4\%$ in the allotted time, meaning 3\% more time compared to the {\it Best accuracy} configuration.

For the {\it Best convergence} operational point, 20 out of 58 datasets show a performance degradation, which is not statistically significant. There's a statistical efficiency loss in 10 datasets, and it requires an average of $15\%$ of the allotted time. This operational point barely improves the convergence speed compared to {\it Best accuracy}. It differs from it in average in 20 datasets, with a required time to reach the steady state of $11.2\%$ of the allotted time. This small time difference is due to the automatic lambda setting. Large $\lambda$ values and small $\delta$ values ensure fast convergence speeds. The automatic lambda algorithm starts with a large lambda value for fast convergence at the initial steps, then reduces it in subsequent iterations to improve accuracy. This results in similar metrics for both {\it Best accuracy} and {\it Best convergence} configurations.

The operational points discussed are graphically represented in Figure~\ref{fig:uci_plot_efficient}. Figures~\ref{fig:uci_plot_efficient}a), b) and c) mark the {\it Best accuracy} operational point with a circle, the {\it Best efficiency} operational point with a square and the {\it Best convergence} operational point with a star. Each dataset is linked by a line connecting the two points, which is a linear fit of the intermediate solutions. Triangles mark the datasets where there is no statistically significant difference between the considered operational points. Figure~\ref{fig:uci_plot_efficient}a) displays that, in most cases, the method's efficiency can be largely increased with only a small degradation in accuracy, as indicated by the relatively flat slopes. Figure~\ref{fig:uci_plot_efficient}b) confirms the results, with the {\it Best accuracy} and {\it Best convergence} operational points tending to be very similar, as evidenced by the high number of triangles. The method thus showcases fast convergence speed and high accuracy simultaneously. Finally, Figure~\ref{fig:uci_plot_efficient}c) displays convergence against efficiency, with the {\it Best convergence} operational point marked with a circle. A dashed line links this point to the {\it Best efficiency} operational point in the same dataset. We observe that there is a larger slope in the lines, which indicates a trade-off between the number of data points used and the speed of convergence. This is to be expected, because the largest the number of points used in the training, the smaller the amount of iterations the system will probably require. However, the method still shows fast convergence, ranging between $85\%$ and $98\%$. This indicates that the method is not only fast but it also requires very small amount of points. 

\begin{landscape}
\begin{table}
\tiny
\centering
    \begin{tabular}{@{}lllllllllllll}
    \toprule
    \\[-1em]
    \multirow{3}{2.5cm}{Dataset} &
    \multirow{3}{0.8cm}{\centering{$\mathcal{A}_\mathcal{O}$}} &
    \multirow{3}{0.8cm}{\centering{$\mathcal{A}_\mathcal{C}^{\text{single}}$}} &
    \multicolumn{3}{c}{\multirow{1}{*}{\centering{Best accuracy}}} & \multicolumn{3}{c}{\multirow{1}{*}{\centering{Best efficiency}}} & \multicolumn{3}{c}{\multirow{1}{*}{\centering{Best Convergence}}}\\
    \\[-0.5em]
    \cline{4-12}
    \\[-0.75em]
    & & & \multirow{1}{0.8cm}{\centering{$\mathit{eff}$}} & \multirow{1}{0.8cm}{\centering{$\mathit{conv}$}} & \multirow{1}{0.8cm}{\centering{$\mathcal{A}_\mathcal{C}^{\text{seq}}$}} & \multirow{1}{0.8cm}{\centering{$\mathit{eff}$}} & \multirow{1}{0.8cm}{\centering{$\mathit{conv}$}} & \multirow{1}{0.8cm}{\centering{$\mathcal{A}_\mathcal{C}^{\text{seq}}$}} & \multirow{1}{0.8cm}{\centering{$\mathit{eff}$}} & \multirow{1}{0.8cm}{\centering{$\mathit{conv}$}} & \multirow{1}{0.8cm}{\centering{$\mathcal{A}_\mathcal{C}^{\text{seq}}$}}  \\
    \\[-0.5em]
    \toprule
    \\[-0.75em]
$\text{abalone}$	&	0.545	&	0.465$\pm$0.095	&	0.085$\pm$0.01	&	0.901$\pm$0.135	&	0.466$\pm$0.087	&	0.623$\pm$0.027	&	0.871$\pm$0.143	&	0.454$\pm$0.087	&	0.378$\pm$0.019	&	0.921$\pm$0.136	&	0.464$\pm$0.079	\\
$\text{acute-inflammation}$	&	1.0	&	1.0$\pm$0.092	&	0.937$\pm$0.032	&	0.966$\pm$0.012	&	1.0$\pm$0.063	&	0.937$\pm$0.032	&	0.966$\pm$0.012	&	1.0$\pm$0.063	&	0.937$\pm$0.032	&	0.966$\pm$0.012	&	1.0$\pm$0.063	\\
$\text{acute-nephritis}$	&	1.0	&	1.0$\pm$0.073	&	0.815$\pm$0.014	&	0.957$\pm$0.017	&	1.0$\pm$0.049	&	0.953$\pm$0.007	&	0.936$\pm$0.037	&	0.983$\pm$0.074	&	0.87$\pm$0.024	&	0.957$\pm$0.019	&	0.996$\pm$0.049	\\
$\text{bank}$	&	0.872	&	0.808$\pm$0.05	&	0.776$\pm$0.041	&	0.88$\pm$0.047	&	0.801$\pm$0.064	&	0.958$\pm$0.005	&	0.845$\pm$0.11	&	0.801$\pm$0.125	&	0.314$\pm$0.023	&	0.956$\pm$0.03	&	0.8$\pm$0.043	\\
$\text{breast-cancer-wisc-diag}$	&	0.921	&	0.904$\pm$0.125	&	0.414$\pm$0.018	&	0.948$\pm$0.07	&	0.851$\pm$0.086	&	0.414$\pm$0.018	&	0.948$\pm$0.07	&	0.851$\pm$0.086	&	0.414$\pm$0.018	&	0.948$\pm$0.07	&	0.851$\pm$0.086	\\
$\text{breast-cancer-wisc-prog}$	&	0.7	&	0.73$\pm$0.069	&	0.976$\pm$0.038	&	0.927$\pm$0.113	&	0.709$\pm$0.117	&	0.976$\pm$0.038	&	0.927$\pm$0.113	&	0.709$\pm$0.117	&	0.976$\pm$0.038	&	0.927$\pm$0.113	&	0.709$\pm$0.117	\\
$\text{breast-cancer-wisc}$	&	0.914	&	0.934$\pm$0.125	&	0.516$\pm$0.019	&	0.967$\pm$0.024	&	0.925$\pm$0.093	&	0.976$\pm$0.003	&	0.844$\pm$0.183	&	0.907$\pm$0.241	&	0.453$\pm$0.02	&	0.97$\pm$0.025	&	0.925$\pm$0.114	\\
$\text{breast-cancer}$	&	0.69	&	0.736$\pm$0.055	&	0.638$\pm$0.02	&	0.944$\pm$0.048	&	0.752$\pm$0.054	&	0.978$\pm$0.019	&	0.919$\pm$0.087	&	0.728$\pm$0.09	&	0.638$\pm$0.02	&	0.944$\pm$0.048	&	0.752$\pm$0.054	\\
$\text{breast-tissue}$	&	0.545	&	0.541$\pm$0.102	&	0.706$\pm$0.015	&	0.93$\pm$0.118	&	0.557$\pm$0.091	&	0.895$\pm$0.014	&	0.915$\pm$0.14	&	0.541$\pm$0.116	&	0.405$\pm$0.024	&	0.938$\pm$0.104	&	0.555$\pm$0.082	\\
$\text{chess-krvkp}$	&	0.995	&	0.983$\pm$0.032	&	0.691$\pm$0.032	&	0.946$\pm$0.014	&	0.961$\pm$0.031	&	0.761$\pm$0.026	&	0.933$\pm$0.017	&	0.941$\pm$0.031	&	0.691$\pm$0.032	&	0.946$\pm$0.014	&	0.961$\pm$0.031	\\
$\text{conn-bench-sonar-mines-rocks}$	&	0.833	&	0.826$\pm$0.097	&	0.695$\pm$0.032	&	0.942$\pm$0.046	&	0.807$\pm$0.101	&	0.765$\pm$0.037	&	0.929$\pm$0.053	&	0.794$\pm$0.101	&	0.695$\pm$0.032	&	0.942$\pm$0.046	&	0.807$\pm$0.101	\\
$\text{connect-4}$	&	0.875	&	0.56$\pm$0.061	&	0.424$\pm$0.025	&	0.952$\pm$0.041	&	0.543$\pm$0.043	&	0.771$\pm$0.039	&	0.934$\pm$0.052	&	0.535$\pm$0.043	&	0.424$\pm$0.025	&	0.952$\pm$0.041	&	0.543$\pm$0.043	\\
$\text{contrac}$	&	0.573	&	0.568$\pm$0.025	&	0.761$\pm$0.012	&	0.964$\pm$0.023	&	0.567$\pm$0.026	&	0.947$\pm$0.009	&	0.943$\pm$0.032	&	0.555$\pm$0.026	&	0.761$\pm$0.012	&	0.964$\pm$0.023	&	0.567$\pm$0.026	\\
$\text{credit-approval}$	&	0.79	&	0.803$\pm$0.032	&	0.925$\pm$0.013	&	0.971$\pm$0.013	&	0.811$\pm$0.017	&	0.988$\pm$0.003	&	0.951$\pm$0.038	&	0.8$\pm$0.072	&	0.917$\pm$0.015	&	0.972$\pm$0.013	&	0.811$\pm$0.02	\\
$\text{cylinder-bands}$	&	0.777	&	0.736$\pm$0.079	&	0.56$\pm$0.022	&	0.936$\pm$0.051	&	0.71$\pm$0.058	&	0.649$\pm$0.032	&	0.921$\pm$0.052	&	0.701$\pm$0.06	&	0.56$\pm$0.022	&	0.936$\pm$0.051	&	0.71$\pm$0.058	\\
$\text{echocardiogram}$	&	0.815	&	0.831$\pm$0.049	&	0.911$\pm$0.009	&	0.964$\pm$0.03	&	0.831$\pm$0.034	&	0.99$\pm$0.013	&	0.963$\pm$0.056	&	0.826$\pm$0.111	&	0.911$\pm$0.009	&	0.964$\pm$0.03	&	0.831$\pm$0.034	\\
$\text{energy-y1}$	&	0.974	&	0.962$\pm$0.046	&	0.621$\pm$0.015	&	0.961$\pm$0.016	&	0.957$\pm$0.038	&	0.937$\pm$0.009	&	0.932$\pm$0.028	&	0.924$\pm$0.048	&	0.621$\pm$0.015	&	0.961$\pm$0.016	&	0.957$\pm$0.038	\\
$\text{energy-y2}$	&	0.922	&	0.899$\pm$0.042	&	0.745$\pm$0.014	&	0.966$\pm$0.019	&	0.905$\pm$0.041	&	0.982$\pm$0.004	&	0.904$\pm$0.054	&	0.854$\pm$0.15	&	0.745$\pm$0.014	&	0.966$\pm$0.019	&	0.905$\pm$0.041	\\
$\text{fertility}$	&	0.9	&	0.91$\pm$0.034	&	0.806$\pm$0.018	&	0.966$\pm$0.032	&	0.908$\pm$0.042	&	0.987$\pm$0.009	&	0.881$\pm$0.195	&	0.88$\pm$0.286	&	0.749$\pm$0.026	&	0.968$\pm$0.028	&	0.908$\pm$0.041	\\
$\text{haberman-survival}$	&	0.613	&	0.651$\pm$0.062	&	0.867$\pm$0.017	&	0.93$\pm$0.061	&	0.648$\pm$0.067	&	0.978$\pm$0.005	&	0.865$\pm$0.159	&	0.623$\pm$0.136	&	0.756$\pm$0.029	&	0.94$\pm$0.047	&	0.647$\pm$0.051	\\
$\text{heart-hungarian}$	&	0.763	&	0.769$\pm$0.057	&	0.639$\pm$0.023	&	0.958$\pm$0.036	&	0.762$\pm$0.048	&	0.964$\pm$0.006	&	0.909$\pm$0.087	&	0.741$\pm$0.094	&	0.576$\pm$0.02	&	0.96$\pm$0.037	&	0.759$\pm$0.054	\\
$\text{hepatitis}$	&	0.742	&	0.811$\pm$0.058	&	0.982$\pm$0.025	&	0.954$\pm$0.101	&	0.813$\pm$0.164	&	0.982$\pm$0.025	&	0.954$\pm$0.101	&	0.813$\pm$0.164	&	0.982$\pm$0.025	&	0.954$\pm$0.101	&	0.813$\pm$0.164	\\
$\text{ilpd-indian-liver}$	&	0.615	&	0.682$\pm$0.049	&	0.535$\pm$0.02	&	0.955$\pm$0.047	&	0.679$\pm$0.049	&	0.979$\pm$0.004	&	0.891$\pm$0.096	&	0.656$\pm$0.087	&	0.535$\pm$0.02	&	0.955$\pm$0.047	&	0.679$\pm$0.049	\\
$\text{ionosphere}$	&	0.944	&	0.932$\pm$0.127	&	0.449$\pm$0.037	&	0.934$\pm$0.051	&	0.914$\pm$0.12	&	0.692$\pm$0.036	&	0.906$\pm$0.062	&	0.892$\pm$0.126	&	0.449$\pm$0.037	&	0.934$\pm$0.051	&	0.914$\pm$0.12	\\
$\text{iris}$	&	0.933	&	0.962$\pm$0.064	&	0.825$\pm$0.013	&	0.967$\pm$0.023	&	0.963$\pm$0.059	&	0.964$\pm$0.005	&	0.931$\pm$0.058	&	0.94$\pm$0.113	&	0.898$\pm$0.014	&	0.967$\pm$0.029	&	0.96$\pm$0.059	\\
$\text{magic}$	&	0.801	&	0.804$\pm$0.022	&	0.777$\pm$0.014	&	0.978$\pm$0.004	&	0.804$\pm$0.01	&	0.991$\pm$0.003	&	0.953$\pm$0.031	&	0.795$\pm$0.096	&	0.777$\pm$0.014	&	0.978$\pm$0.004	&	0.804$\pm$0.01	\\
$\text{mammographic}$	&	0.803	&	0.823$\pm$0.036	&	0.641$\pm$0.023	&	0.962$\pm$0.04	&	0.804$\pm$0.06	&	0.81$\pm$0.02	&	0.942$\pm$0.046	&	0.791$\pm$0.065	&	0.641$\pm$0.023	&	0.962$\pm$0.04	&	0.804$\pm$0.06	\\
$\text{miniboone}$	&	0.936	&	0.736$\pm$0.098	&	0.523$\pm$0.027	&	0.942$\pm$0.103	&	0.699$\pm$0.096	&	0.523$\pm$0.027	&	0.942$\pm$0.103	&	0.699$\pm$0.096	&	0.523$\pm$0.027	&	0.942$\pm$0.103	&	0.699$\pm$0.096	\\
$\text{molec-biol-splice}$	&	0.944	&	0.59$\pm$0.016	&	0.543$\pm$0.031	&	0.936$\pm$0.021	&	0.583$\pm$0.019	&	0.788$\pm$0.031	&	0.921$\pm$0.024	&	0.565$\pm$0.02	&	0.543$\pm$0.031	&	0.936$\pm$0.021	&	0.583$\pm$0.019	\\
$\text{mushroom}$	&	0.979	&	0.962$\pm$0.09	&	0.711$\pm$0.022	&	0.902$\pm$0.05	&	0.93$\pm$0.072	&	0.761$\pm$0.022	&	0.898$\pm$0.052	&	0.919$\pm$0.07	&	0.711$\pm$0.022	&	0.902$\pm$0.05	&	0.93$\pm$0.072	\\
$\text{musk-1}$	&	0.812	&	0.745$\pm$0.065	&	0.773$\pm$0.041	&	0.927$\pm$0.092	&	0.65$\pm$0.09	&	0.773$\pm$0.041	&	0.927$\pm$0.092	&	0.65$\pm$0.09	&	0.773$\pm$0.041	&	0.927$\pm$0.092	&	0.65$\pm$0.09	\\
$\text{musk-2}$	&	0.958	&	0.805$\pm$0.1	&	0.979$\pm$0.003	&	0.838$\pm$0.241	&	0.718$\pm$0.228	&	0.979$\pm$0.003	&	0.838$\pm$0.241	&	0.718$\pm$0.228	&	0.979$\pm$0.003	&	0.838$\pm$0.241	&	0.718$\pm$0.228	\\
$\text{oocytes\_merluccius\_nucleus\_4d}$	&	0.771	&	0.579$\pm$0.096	&	0.728$\pm$0.03	&	0.903$\pm$0.129	&	0.596$\pm$0.105	&	0.985$\pm$0.003	&	0.836$\pm$0.166	&	0.572$\pm$0.138	&	0.728$\pm$0.03	&	0.903$\pm$0.129	&	0.596$\pm$0.105	\\
$\text{oocytes\_trisopterus\_nucleus\_2f}$	&	0.803	&	0.741$\pm$0.061	&	0.714$\pm$0.022	&	0.89$\pm$0.054	&	0.683$\pm$0.054	&	0.763$\pm$0.02	&	0.89$\pm$0.057	&	0.677$\pm$0.051	&	0.714$\pm$0.022	&	0.89$\pm$0.054	&	0.683$\pm$0.054	\\
$\text{parkinsons}$	&	0.923	&	0.878$\pm$0.114	&	0.751$\pm$0.023	&	0.88$\pm$0.089	&	0.86$\pm$0.096	&	0.811$\pm$0.019	&	0.877$\pm$0.081	&	0.837$\pm$0.102	&	0.693$\pm$0.024	&	0.882$\pm$0.087	&	0.851$\pm$0.096	\\
$\text{pima}$	&	0.721	&	0.722$\pm$0.023	&	0.989$\pm$0.003	&	0.968$\pm$0.024	&	0.73$\pm$0.026	&	0.99$\pm$0.004	&	0.958$\pm$0.032	&	0.723$\pm$0.037	&	0.947$\pm$0.008	&	0.969$\pm$0.017	&	0.725$\pm$0.02	\\
$\text{pittsburg-bridges-MATERIAL}$	&	0.909	&	0.909$\pm$0.027	&	0.984$\pm$0.005	&	0.97$\pm$0.021	&	0.911$\pm$0.054	&	0.99$\pm$0.005	&	0.934$\pm$0.076	&	0.898$\pm$0.172	&	0.99$\pm$0.043	&	0.977$\pm$0.039	&	0.909$\pm$0.081	\\
$\text{pittsburg-bridges-REL-L}$	&	0.667	&	0.671$\pm$0.033	&	0.99$\pm$0.003	&	0.963$\pm$0.031	&	0.679$\pm$0.061	&	0.991$\pm$0.004	&	0.95$\pm$0.058	&	0.679$\pm$0.076	&	0.99$\pm$0.003	&	0.963$\pm$0.031	&	0.679$\pm$0.061	\\
$\text{pittsburg-bridges-T-OR-D}$	&	0.857	&	0.857$\pm$0.0	&	0.994$\pm$0.006	&	0.975$\pm$0.002	&	0.862$\pm$0.021	&	0.996$\pm$0.008	&	0.975$\pm$0.001	&	0.86$\pm$0.011	&	0.994$\pm$0.006	&	0.975$\pm$0.002	&	0.862$\pm$0.021	\\
$\text{planning}$	&	0.703	&	0.703$\pm$0.0	&	0.994$\pm$0.025	&	0.98$\pm$0.003	&	0.703$\pm$0.018	&	0.994$\pm$0.025	&	0.98$\pm$0.003	&	0.703$\pm$0.018	&	0.994$\pm$0.025	&	0.98$\pm$0.003	&	0.703$\pm$0.018	\\
$\text{ringnorm}$	&	0.983	&	0.905$\pm$0.037	&	0.687$\pm$0.029	&	0.818$\pm$0.026	&	0.924$\pm$0.035	&	0.912$\pm$0.013	&	0.798$\pm$0.033	&	0.862$\pm$0.04	&	0.687$\pm$0.029	&	0.818$\pm$0.026	&	0.924$\pm$0.035	\\
$\text{seeds}$	&	0.881	&	0.87$\pm$0.113	&	0.85$\pm$0.012	&	0.971$\pm$0.021	&	0.87$\pm$0.069	&	0.984$\pm$0.003	&	0.913$\pm$0.094	&	0.849$\pm$0.13	&	0.76$\pm$0.011	&	0.972$\pm$0.023	&	0.869$\pm$0.069	\\
$\text{spambase}$	&	0.926	&	0.923$\pm$0.032	&	0.741$\pm$0.035	&	0.963$\pm$0.009	&	0.919$\pm$0.024	&	0.983$\pm$0.003	&	0.925$\pm$0.048	&	0.893$\pm$0.119	&	0.741$\pm$0.035	&	0.963$\pm$0.009	&	0.919$\pm$0.024	\\
$\text{statlog-australian-credit}$	&	0.681	&	0.681$\pm$0.0	&	0.995$\pm$0.023	&	0.98$\pm$0.0	&	0.681$\pm$0.0	&	0.995$\pm$0.023	&	0.98$\pm$0.0	&	0.681$\pm$0.0	&	0.995$\pm$0.023	&	0.98$\pm$0.0	&	0.681$\pm$0.0	\\
$\text{statlog-german-credit}$	&	0.765	&	0.723$\pm$0.016	&	0.952$\pm$0.008	&	0.967$\pm$0.019	&	0.73$\pm$0.02	&	0.99$\pm$0.01	&	0.922$\pm$0.075	&	0.708$\pm$0.131	&	0.966$\pm$0.007	&	0.969$\pm$0.021	&	0.73$\pm$0.02	\\
$\text{statlog-heart}$	&	0.852	&	0.848$\pm$0.02	&	0.836$\pm$0.011	&	0.972$\pm$0.013	&	0.849$\pm$0.022	&	0.989$\pm$0.003	&	0.933$\pm$0.036	&	0.822$\pm$0.05	&	0.836$\pm$0.011	&	0.972$\pm$0.013	&	0.849$\pm$0.022	\\
$\text{statlog-image}$	&	0.952	&	0.505$\pm$0.067	&	0.768$\pm$0.022	&	0.851$\pm$0.073	&	0.596$\pm$0.062	&	0.952$\pm$0.005	&	0.735$\pm$0.103	&	0.501$\pm$0.085	&	0.768$\pm$0.022	&	0.851$\pm$0.073	&	0.596$\pm$0.062	\\
$\text{statlog-vehicle}$	&	0.765	&	0.629$\pm$0.068	&	0.291$\pm$0.012	&	0.912$\pm$0.078	&	0.621$\pm$0.07	&	0.415$\pm$0.014	&	0.889$\pm$0.079	&	0.599$\pm$0.069	&	0.291$\pm$0.012	&	0.912$\pm$0.078	&	0.621$\pm$0.07	\\
$\text{synthetic-control}$	&	1.0	&	0.693$\pm$0.083	&	0.878$\pm$0.024	&	0.874$\pm$0.06	&	0.722$\pm$0.08	&	0.878$\pm$0.024	&	0.874$\pm$0.06	&	0.722$\pm$0.08	&	0.853$\pm$0.029	&	0.881$\pm$0.06	&	0.714$\pm$0.072	\\
$\text{teaching}$	&	0.548	&	0.597$\pm$0.095	&	0.296$\pm$0.018	&	0.89$\pm$0.112	&	0.619$\pm$0.095	&	0.602$\pm$0.021	&	0.827$\pm$0.126	&	0.568$\pm$0.106	&	0.296$\pm$0.018	&	0.89$\pm$0.112	&	0.619$\pm$0.095	\\
$\text{tic-tac-toe}$	&	0.974	&	0.876$\pm$0.035	&	0.525$\pm$0.021	&	0.918$\pm$0.023	&	0.891$\pm$0.035	&	0.803$\pm$0.02	&	0.884$\pm$0.03	&	0.835$\pm$0.038	&	0.525$\pm$0.021	&	0.918$\pm$0.023	&	0.891$\pm$0.035	\\
$\text{titanic}$	&	0.778	&	0.778$\pm$0.021	&	0.915$\pm$0.021	&	0.976$\pm$0.009	&	0.774$\pm$0.009	&	0.987$\pm$0.006	&	0.957$\pm$0.085	&	0.769$\pm$0.151	&	0.915$\pm$0.021	&	0.976$\pm$0.009	&	0.774$\pm$0.009	\\
$\text{twonorm}$	&	0.976	&	0.974$\pm$0.029	&	0.59$\pm$0.02	&	0.977$\pm$0.004	&	0.97$\pm$0.013	&	0.982$\pm$0.003	&	0.94$\pm$0.035	&	0.946$\pm$0.062	&	0.59$\pm$0.02	&	0.977$\pm$0.004	&	0.97$\pm$0.013	\\
$\text{vertebral-column-2clases}$	&	0.839	&	0.847$\pm$0.066	&	0.796$\pm$0.014	&	0.962$\pm$0.027	&	0.855$\pm$0.042	&	0.987$\pm$0.004	&	0.891$\pm$0.102	&	0.821$\pm$0.128	&	0.796$\pm$0.014	&	0.962$\pm$0.027	&	0.855$\pm$0.042	\\
$\text{vertebral-column-3clases}$	&	0.806	&	0.825$\pm$0.063	&	0.713$\pm$0.018	&	0.965$\pm$0.039	&	0.819$\pm$0.053	&	0.979$\pm$0.004	&	0.903$\pm$0.066	&	0.799$\pm$0.074	&	0.713$\pm$0.018	&	0.965$\pm$0.039	&	0.819$\pm$0.053	\\
$\text{waveform-noise}$	&	0.843	&	0.841$\pm$0.044	&	0.361$\pm$0.016	&	0.964$\pm$0.013	&	0.825$\pm$0.035	&	0.689$\pm$0.038	&	0.936$\pm$0.016	&	0.802$\pm$0.035	&	0.361$\pm$0.016	&	0.964$\pm$0.013	&	0.825$\pm$0.035	\\
$\text{waveform}$	&	0.843	&	0.833$\pm$0.033	&	0.428$\pm$0.015	&	0.971$\pm$0.013	&	0.825$\pm$0.028	&	0.849$\pm$0.026	&	0.932$\pm$0.023	&	0.796$\pm$0.028	&	0.428$\pm$0.015	&	0.971$\pm$0.013	&	0.825$\pm$0.028	\\
$\text{wine}$	&	0.944	&	0.943$\pm$0.098	&	0.743$\pm$0.016	&	0.962$\pm$0.036	&	0.926$\pm$0.084	&	0.918$\pm$0.014	&	0.924$\pm$0.055	&	0.9$\pm$0.084	&	0.811$\pm$0.02	&	0.965$\pm$0.039	&	0.925$\pm$0.084	\\
\\[-0.25em]
$\textit{average}$	&	0.839 & 0.800 & 0.716 & 0.942 & 0.794 & 0.882 & 0.913 & 0.776 & 0.701 & 0.944 & 0.793	\\
    \bottomrule
    \end{tabular}
\caption{Results of the copying process for the single-pass and sequential approaches.}
\label{tab:results}
\end{table}
\end{landscape}



\section{Conclusions}
\label{sec:conclusions}
In this paper, we proposed a sequential approach for replicating the decision behavior of a machine learning model through copying. Our approach offers a unique solution to the problem of balancing memory requirements and convergence speed and is the first to tackle the problem modeled in Equation~\ref{eq:capacity} in the context of copying. To this aim, we moved the copying problem to a probabilistic setting and introduced two theorems to demonstrate that the sequential approach converges to the single-pass approach when the number of samples used for copying increases.

We also studied the duality of compression and memorization in the copy model and showed that a perfect copy can compress all the data in the model parameters. To evaluate this effect, we used epistemic uncertainty as a reliable data compression measure for copying. This measure is only valid for copies and can not be extrapolated to standard learning procedures. This is because contrary to the standard learning case, there is no aleatoric uncertainty when copying. Therefore, all uncertainty measured corresponds to that coming from the model itself. As such, we can devise copy models that effectively compress all the data with guarantees. With this in mind, we have identified the phenomenon of catastrophic forgetting in copies, a well-known effect that appears in online learning processes. To mitigate this effect, we have introduced a regularization term derived from an invariant in one of the theorems that enable the process to become more stable. 

To reduce the computational time and memory used, we also introduced a sample selection policy. This policy controls the compression level that each data sample undertakes. If new data points are already well compressed and represented by the copy at the considered iteration, it is unnecessary to feed them back to this model. As a result of this process, very little data is required during the learning process. Moreover, we observed that the number of samples required to converge to an optimal solution stabilizes to a certain amount.

\begin{figure}[h!]
  \centering
  \includegraphics[width=0.75\columnwidth]{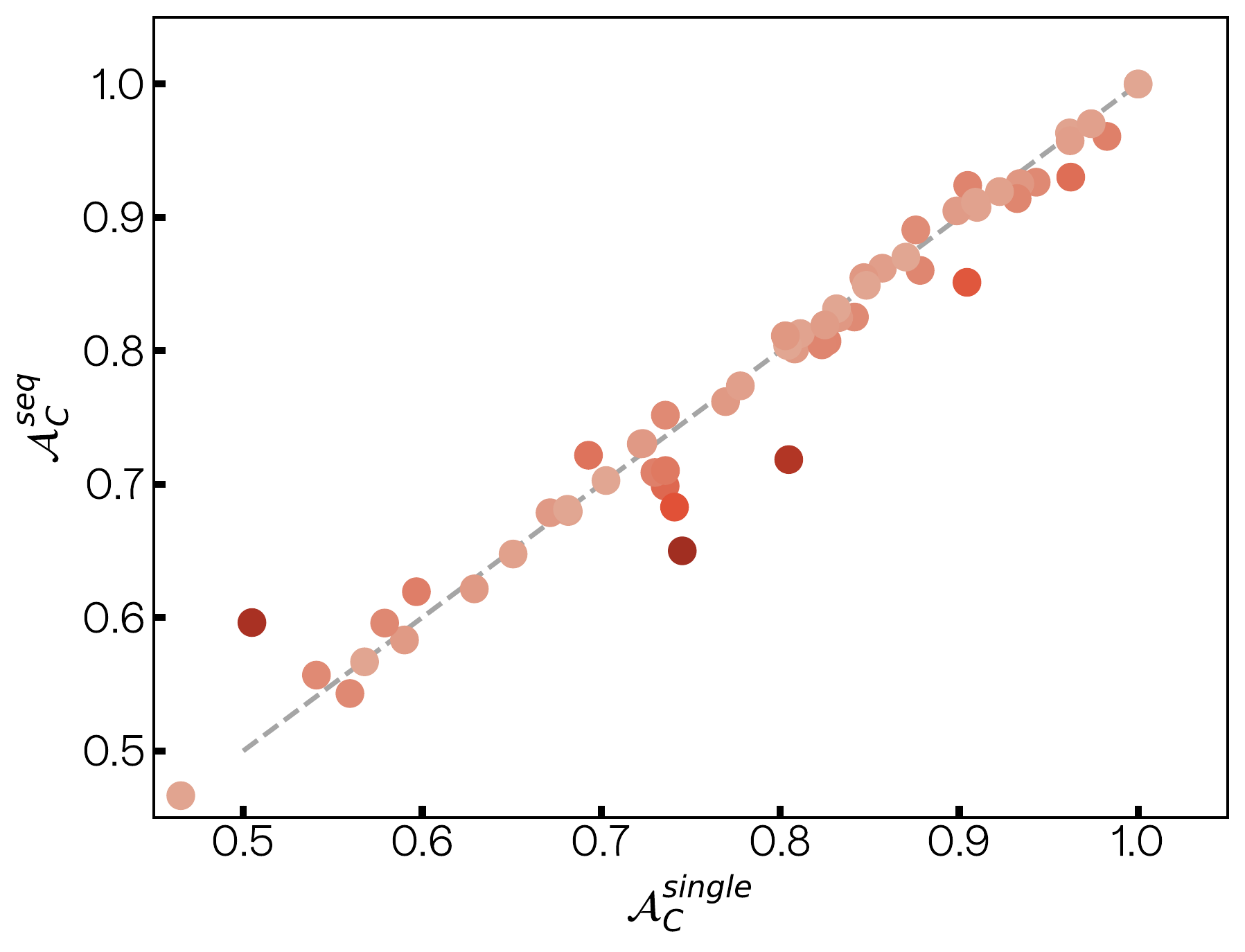}
    \caption{Comparison of accuracy between the one-shot single-pass and sequential approaches. Points over the dashed line exhibit the same accuracy using both methods whereas the points above (below) the line are points where the sequential (single-pass) method gives better performance. Color gets darker with the distance of each point to the dashed line.}
\label{fig:uci_plot}
\end{figure}

Additionally, we introduced a regularization term for the copy-loss function to prevent noise during the learning process. This term prevents copies from diverging from one iteration to another. To control the hyper-parameter governing this regularization term, we devised an automatic adjustment policy. This policy resorts to a simple but stable meta-learning algorithm that allows us to weigh the dynamic adjustment of the regularization term. As a result, there is no need for hyperparameter tuning.

Our empirical validation on 58 UCI datasets and six different machine learning architectures showed that the sequential approach can create a copy with the same accuracy as the single-pass approach while offering faster convergence and more efficient use of computational resources. The sequential approach provides a flexible solution for companies to reduce the maintenance costs of machine learning models in production by choosing the most suitable copying setting based on available computational resources for memory and execution time.

\begin{figure}[h!]
  \centering
  \subfloat[]{ \includegraphics[width=0.5\columnwidth]{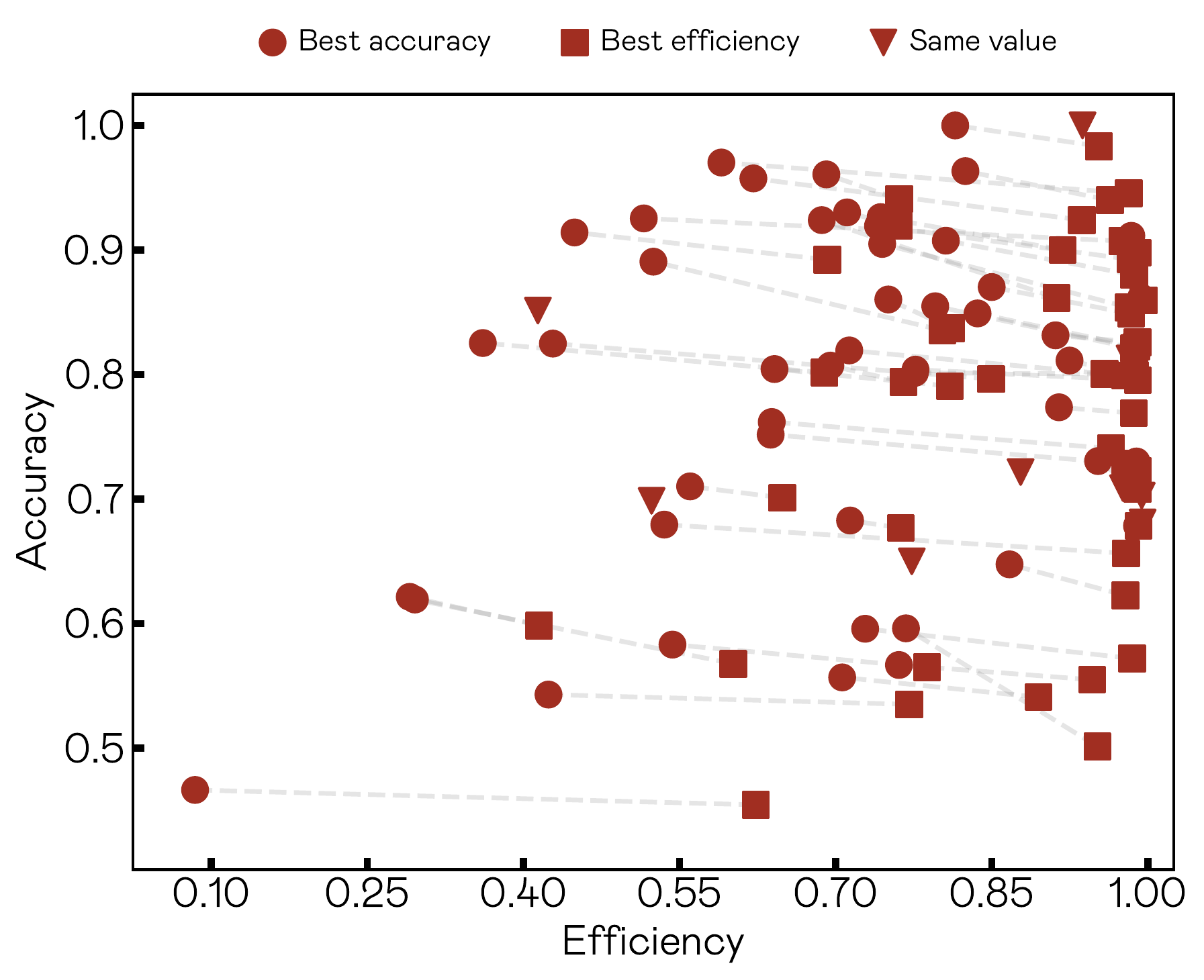}}
  \subfloat[]{ \includegraphics[width=0.5\columnwidth]{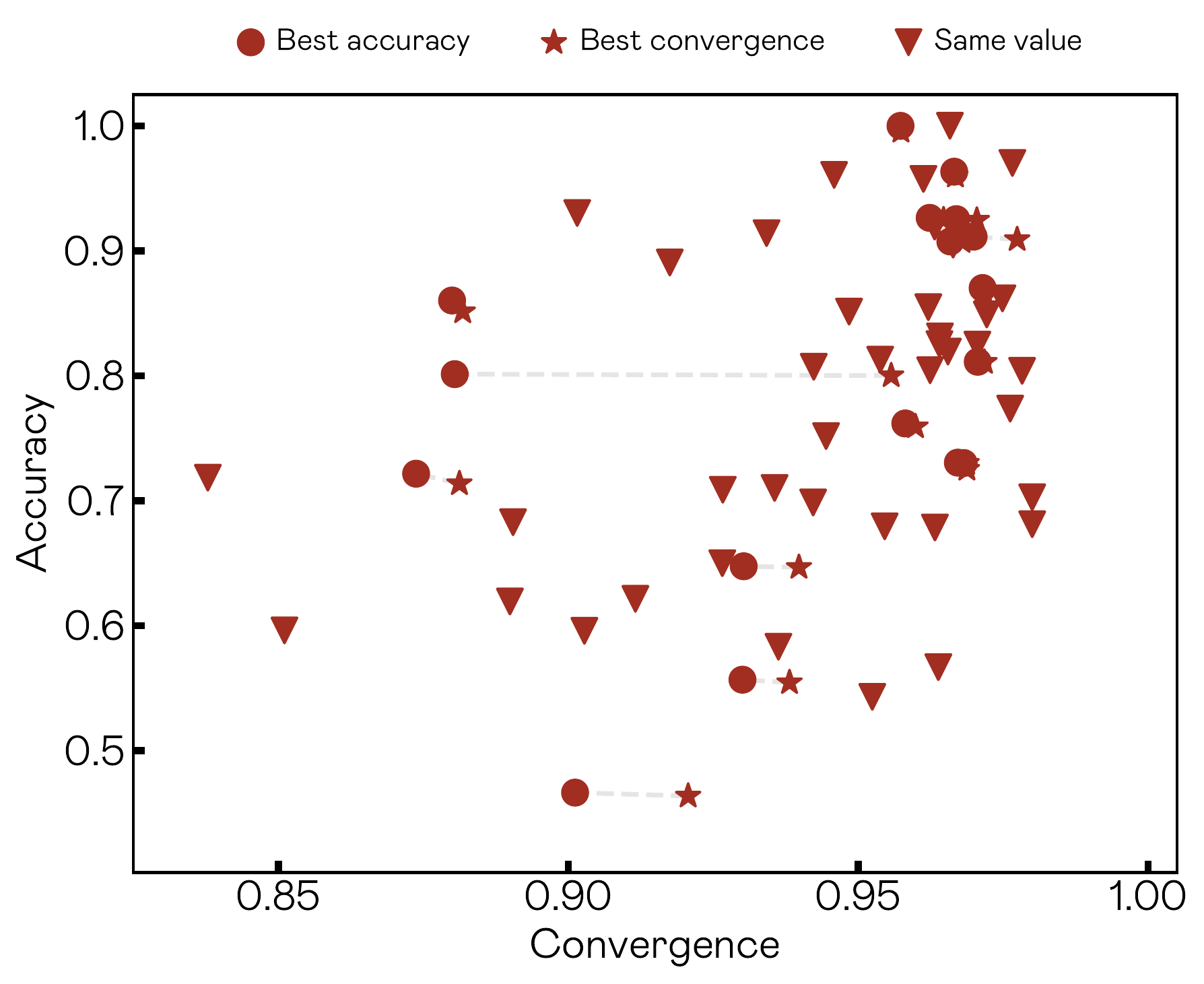}}

  \subfloat[]{ \includegraphics[width=0.5\columnwidth]{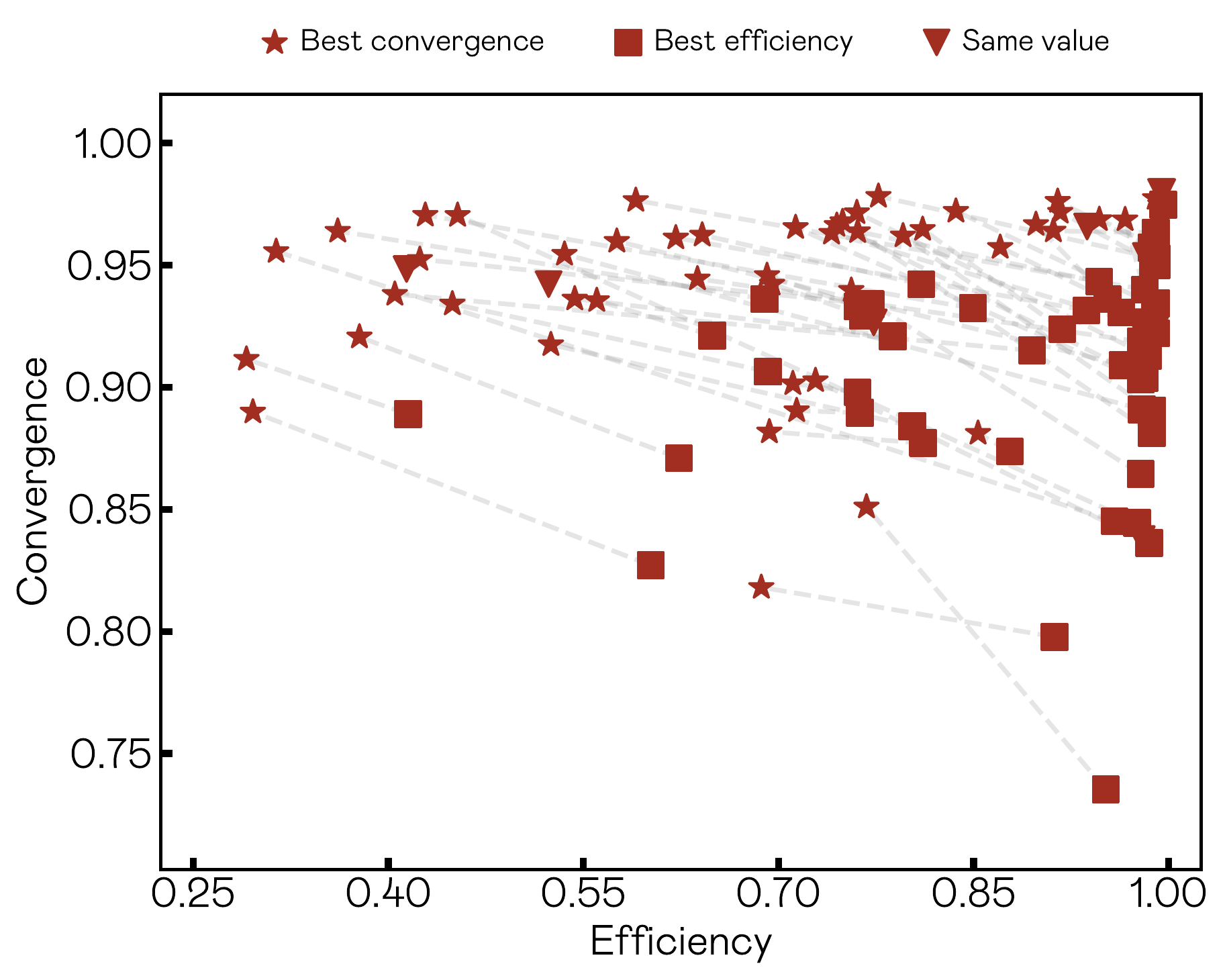}}
    \caption{Comparison of \textit{Best accuracy} (circles), \textit{Best efficiency} (squares) and \textit{Best convergence} (stars) operational points for each dataset in terms of a) accuracy and efficiency, b) accuracy and convergence and c) convergence and efficiency. The inverted triangles identify those cases where the considered operational points are the same. The gray dashed lines connect the points of the same dataset and display the linear fit of intermediate solutions.}
\label{fig:uci_plot_efficient}
\end{figure}


\acks{This work was funded by Huawei Technologies Duesseldorf GmbH under project TC20210924032, and partially supported by MCIN/AEI/10.13039/501100011033 under project PID2019-
105093GB-I00}



\newpage
\appendix
\section{Proof of convergence}
\label{Sec:AppA}
Hence, in what follows we show that solving the optimization problem in Equation~\ref{eq:opt} equals solving the same problem in the limit where $i$ approaches infinity. To do this, we first prove that the sequence of functions $F_i$ converges to $F(\theta)$ for increasing values of $i$, {\em i.e.} that $\lim_{i\to\infty}F_i(\theta) = F(\theta)$. Then, we also prove that from this result it derives that the sequence of $\theta^*_i$ converges to $\theta^*$ as $i$ approaches infinity. We do so under several assumptions.

\subsection*{Uniform convergence for $F_i$}

Let us introduce the following proposition on the uniform convergence of functions.
 
\begin{proposition}
A sequence of functions $
    f_i:D\subseteq\mathbb{R}^n\longrightarrow \mathbb{R}
$
is uniformly convergent to a limit function $   f:D\subseteq\mathbb{R}^n\longrightarrow \mathbb{R}
$, if and only if
\begin{equation*}
    ||f-f_i||_{\infty}\xrightarrow[]{i\to\infty} 0
    \label{eq:teoremaUniforme_norma}
\end{equation*}
where $||\cdot||_\infty$ denotes the supremum norm of the functions $f_i-f$ on $D$~\footnote{https://www.bookofproofs.org/branches/supremum-norm-and-uniform-convergence/proof/}.
\end{proposition}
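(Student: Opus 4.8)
The plan is to prove this standard equivalence by directly unpacking two definitions: the $\varepsilon$--$N$ characterization of uniform convergence (for every $\varepsilon>0$ there is an $N\in\mathbb{N}$ with $|f(x)-f_i(x)|<\varepsilon$ for all $x\in D$ and all $i\geq N$) and the supremum norm $\|f-f_i\|_\infty=\sup_{x\in D}|f(x)-f_i(x)|$. Since the statement is an ``if and only if'', I would establish the two implications separately, each being a short translation between pointwise bounds and a bound on the supremum.

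For the forward direction I would assume $\{f_i\}$ converges uniformly to $f$ and fix $\varepsilon>0$. By definition there exists $N$ such that $|f(x)-f_i(x)|<\varepsilon/2$ holds simultaneously for all $x\in D$ and all $i\geq N$. The decisive step is to take the supremum over $x\in D$: since every term is bounded by $\varepsilon/2$, the supremum is likewise bounded by $\varepsilon/2<\varepsilon$, so $\|f-f_i\|_\infty<\varepsilon$ for all $i\geq N$, which is exactly $\|f-f_i\|_\infty\to 0$. For the reverse direction I would assume $\|f-f_i\|_\infty\to 0$, fix $\varepsilon>0$, and pick $N$ with $\|f-f_i\|_\infty<\varepsilon$ for all $i\geq N$. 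Then for every fixed $x\in D$ the elementary bound $|f(x)-f_i(x)|\leq\sup_{y\in D}|f(y)-f_i(y)|=\|f-f_i\|_\infty<\varepsilon$ transfers the estimate back to each point uniformly in $x$, recovering the definition of uniform convergence.

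The only genuinely delicate point, and the one I would flag explicitly, is the interplay between the strict inequalities in the $\varepsilon$--$N$ definition and the supremum operation: a supremum of quantities each strictly smaller than $\varepsilon$ need not itself be strictly smaller than $\varepsilon$. This is precisely why, in the forward direction, I would deliberately work with the $\varepsilon/2$ bound so that the supremum lands strictly below $\varepsilon$; in the reverse direction no such care is required, because the pointwise-$\leq$-supremum inequality already carries the strict bound downward to every point at once. Beyond this observation the argument is routine, amounting to a clean restatement of the definitions rather than any substantive estimate.
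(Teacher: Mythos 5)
Your proof is correct, and since the paper does not prove this proposition at all---it states it as a known fact, deferring to an external footnote reference---there is no in-paper argument to diverge from; your definition-unpacking proof is exactly the standard one that such a reference contains, and it is what the paper implicitly relies on when it applies the proposition to conclude $F_i \rightrightarrows F$ from $\|F - F_i\|_\infty \to 0$ in Theorem~1. The one subtlety you flag (a supremum of quantities strictly below $\varepsilon$ need not itself be strictly below $\varepsilon$) is handled soundly by the $\varepsilon/2$ device, though even the weaker conclusion $\|f-f_i\|_\infty \leq \varepsilon$ for all $i \geq N$ would already suffice to give $\|f-f_i\|_\infty \to 0$.
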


On this basis, let's prove Theorem \ref{theorem:function_convergence} from main text:\\
\begin{theorem}\ref{theorem:function_convergence}
Let $S_i\subseteq S_{i+1}\subseteq\cdots\subseteq S$ be a subsets' convergent sequence. Then, a sequence of functions $\big\{F_i\big\}_i$ defined as $F_i(\theta)=\sum_{z\in S_i} \mathcal{P}(\theta|\fc(z,\theta),\fo(z))$, uniformly converges to $F(\theta)=\sum_{z\in S} \mathcal{P}(\theta|\fc(z,\theta),\fo(z))$
\end{theorem}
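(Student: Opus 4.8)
The plan is to invoke the Proposition stated just above and reduce the claim to showing that $\|F-F_i\|_\infty\to 0$ as $i\to\infty$. Since $S_i\subseteq S$, for every $\theta$ we have
\[
F(\theta)-F_i(\theta)=\sum_{z\in S\setminus S_i}\mathcal{P}(\theta|\fc(z,\theta),\fo(z))\ge 0,
\]
where non-negativity of each summand (recall $\mathcal{P}$ is of the exponential form $e^{-\gamma\ell}$ from the footnote, hence positive) lets me drop the absolute value. Consequently
\[
\|F-F_i\|_\infty=\sup_{\theta\in\Theta}\;\sum_{z\in S\setminus S_i}\mathcal{P}(\theta|\fc(z,\theta),\fo(z)),
\]
so the entire problem collapses to controlling this tail sum \emph{uniformly} in $\theta$.

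The device I would use is the Weierstrass M-test. I would assume (this is one of the ``several assumptions'' the appendix announces) that for each $z\in S$ there is a constant $M_z\ge 0$ with $\sup_{\theta\in\Theta}\mathcal{P}(\theta|\fc(z,\theta),\fo(z))\le M_z$ and $\sum_{z\in S}M_z<\infty$. This summability hypothesis does double duty: it guarantees that the limit function $F$ is well-defined (finite) on all of $\Theta$, and it certifies, via the M-test, that the series converges uniformly on $\Theta$.

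The tail argument is then routine. Fix $\varepsilon>0$. Because $\sum_{z\in S}M_z$ converges, choose a \emph{finite} subset $A\subseteq S$ with $\sum_{z\in S\setminus A}M_z<\varepsilon$. Since the $S_i$ are increasing with $\bigcup_i S_i=S$, each of the finitely many elements of $A$ belongs to some $S_i$, so there is $N$ with $A\subseteq S_N$, and hence $A\subseteq S_i$ for all $i\ge N$. For such $i$,
\[
\|F-F_i\|_\infty=\sup_{\theta}\sum_{z\in S\setminus S_i}\mathcal{P}(\theta|\fc(z,\theta),\fo(z))\le\sum_{z\in S\setminus S_i}M_z\le\sum_{z\in S\setminus A}M_z<\varepsilon.
\]
As $\varepsilon$ was arbitrary, $\|F-F_i\|_\infty\to 0$, and the Proposition yields uniform convergence of $\{F_i\}_i$ to $F$.

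The genuine content sits entirely in the summability hypothesis $\sum_{z\in S}M_z<\infty$, and I expect this to be the main obstacle. On a countably infinite $S$ each term lies in $(0,1]$, so the series need not converge without a decay condition on the per-point quantity $\mathcal{P}(\theta|\fc(z,\theta),\fo(z))$ (equivalently, growth of the per-point loss $\ell$ over the sampled region). Everything downstream—the reduction to a tail, the M-test, and the exhaustion argument using that $A$ is finite and the $S_i$ are nested and cofinal in $S$—is mechanical. The care required is therefore in stating the right dominating-sequence assumption so that both $F$ is finite and the tail is uniformly controllable; if instead one only knows pointwise boundedness of $\mathcal{P}$ without summability, uniform (or even pointwise) convergence can fail, so this is the hypothesis I would make explicit before running the argument above.
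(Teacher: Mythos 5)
Your proof is correct and opens exactly as the paper's does: both invoke the supremum-norm Proposition and rewrite the difference as the tail sum $\sum_{z \in S\setminus S_i} \mathcal{P}(\theta|\fc(z,\theta),\fo(z))$ over the not-yet-included points. From there, however, the routes genuinely diverge. The paper bounds each summand by $1$ (using $0 \le \mathcal{P} \le 1$) and concludes $\|F-F_i\|_{\infty} \le \big|S\setminus S_i\big| \to 0$ because ``$S_i$ converges to $S$''; but since the paper takes $|S| = \aleph^0$ while each $S_t$ is finite, the cardinality $\big|S\setminus S_i\big|$ is infinite at every $i$, so this bound never tends to zero and the published argument is rigorous only in the case where $S$ is finite (where the nested sequence eventually equals $S$). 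You instead posit a summable dominating sequence $M_z$ with $\sup_{\theta\in\Theta}\mathcal{P}(\theta|\fc(z,\theta),\fo(z)) \le M_z$ and $\sum_{z\in S} M_z < \infty$, then run the Weierstrass M-test together with a finite-exhaustion argument: choose a finite $A$ carrying all but $\varepsilon$ of the mass, and use that the $S_i$ are nested and exhaust $S$ to get $A \subseteq S_i$ for all large $i$. This buys a proof that actually works in the countably infinite setting the theorem claims to cover, and your closing diagnosis is exactly right: some decay hypothesis is unavoidable, since infinitely many terms in $(0,1]$ need not even yield a finite $F(\theta)$, let alone uniform convergence of the tails. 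In effect your proposal repairs the gap in the paper's own proof rather than replicating it; the one caveat is that the summability hypothesis is yours, not the paper's, so strictly speaking you prove a slightly strengthened (correctly hypothesized) version of the stated theorem, and it would be worth flagging that the paper's ``several assumptions'' never make this condition explicit.
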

\begin{proof}
Let us start by taking the supremum norm
\begin{align*}
    ||F(\theta)-F_i(\theta)||_{\infty} &= \sup_{\theta \in \Theta}\Big\{\Big|F(\theta)-F_i(\theta)\Big|\Big\}\nonumber\\
    &=\sup_{\theta \in \Theta}\Big\{\Big|\sum_{z\in S} \mathcal{P}(\theta|fc(z,\theta),\fo(z))-\sum_{z\in S_i} \mathcal{P}(\theta|fc(z,\theta),\fo(z))\Big|\Big\}\nonumber\\
    &=\sup_{\theta \in \Theta}\Big\{\Big|\sum_{z\in S\setminus S_i}\mathcal{P}(\theta|fc(z,\theta),\fo(z))\Big| \Big\}
\end{align*}

Since $\mathcal{P}(\theta|fc(z,\theta),\fo(z))$ is a probability function, it holds that $0\leq\mathcal{P}(\theta|fc(z,\theta),\fo(z))\leq 1$. Hence, the norm is bounded by the equation below
\begin{align*}
    ||F(\theta)-F_i(\theta)||_{\infty} &=\sup_{\theta \in \Theta}\Big\{\Big|\sum_{z\in S\setminus S_i}\mathcal{P}(\theta|fc(z,\theta),\fo(z))\Big| \Big\}\nonumber\\
    &\leq\sup_{\theta \in \Theta}\Big\{\sum_{z\in S\setminus S_i}\Big|\mathcal{P}(\theta|fc(z,\theta),\fo(z))\Big| \Big\}\leq \Big|S\setminus S_i\Big|
\end{align*}
for $\Big|S\setminus S_i\Big|$ the cardinality of the set $S\setminus S_i$. As discussed before, by definition $S_i$ converges to $S$ for large values of $i$. According to the proposition above, therefore, we can prove that 

\begin{equation*}
    ||F(\theta)-F_i(\theta)||_{\infty} \xrightarrow[]{i\to\infty} 0
\end{equation*}

From this proof, it follows that  when $i$ approaches infinity the function $F_i(\theta)$ uniformly converges to $F(\theta)$

\begin{equation}
    ||F(\theta)-F_i(\theta)||_{\infty} \xrightarrow[]{i\to\infty} 0 \; \Longrightarrow \; F_i(\theta)\rightrightarrows F(\theta)
\end{equation}
\end{proof}

As a consequence of this uniform convergence, two properties of the function $F$ naturally arise. Firstly, $F_i(\theta)$ converges point-wise to $F(\theta)$. Secondly, $F(\theta)$ is a continuous function on $\Theta$.

\subsection*{Parameter convergence}

\begin*{\textbf{Theorem  \ref{theorem:params_convergence}}}
Under the conditions of Theorem~\ref{theorem:function_convergence}, a sequence of parameters $\big\{\theta_i^*\big\}_i$ defined as $\theta_i^*=\argmax_{\theta \in \Theta} F_i(\theta)$, converges to $\theta^*=\argmax_{\theta \in \Theta} F(\theta)$, where $\Theta$ is the complete set of parameter
\end*{.}
\begin{proof}
As previously introduced, we can define the optimal copy parameters for a given value of $i$, and its corresponding function $F_i$, according to the following equation
\begin{equation}
    \theta_i^* = \argmax_{\theta\in\Theta} F_i(\theta)
\end{equation}
for $\Theta$ the complete parameter set. We assume that this set is only \textit{well defined} if $F_i$ and $F$ have a unique global maximum. This is a commonly made assumption in the literature. In addition, we also assume that $\Theta$ is compact. 

From the definition of $\theta_i^*$ above it follows that
\begin{equation*}
    F_i(\theta_i^*)\geq F_i(\theta),\;\; \forall\theta\in\Theta,\;\forall i\in\mathbb{N}
\end{equation*}
and using the point-wise convergence of $F_i$ we obtain 
\begin{equation}
    F(\hat{\theta}^*)\geq F(\theta),\;\; \forall\theta\in\Theta
\end{equation}
where $\hat{\theta}^*$ is the limit of the sequence $\big(\theta_i^*\big)_i$. As a consequence of the compactness of $\Theta$ and the continuity of $F$, we can conclude that $\hat{\theta}^*$ must exist. Moreover, given our assumption that $\Theta$ is \textit{well defined}, we can conclude that
\begin{equation}
    \hat{\theta}^*={\theta}^* =\argmax_\theta F(\theta).
\end{equation}
\end{proof}

The proof above shows that we can approximate the true optimal parameters by sequentially estimating their value using a sequence of subsets $S_i$ that uniformly converge to $S$. In other words, it demonstrates the feasibility of the sequential approach. This is a theoretical feasibility.

\end{document}